\documentclass{article}

\usepackage{PRIMEarxiv}

\usepackage[utf8]{inputenc} 
\usepackage[T1]{fontenc}    
\usepackage{hyperref}       
\usepackage{url}            
\usepackage{booktabs}       
\usepackage{amsfonts}       
\usepackage{nicefrac}       
\usepackage{microtype}      
\usepackage{lipsum}
\usepackage{amsthm}
\theoremstyle{plain}
\newtheorem{theorem}{Theorem}[section]
\newtheorem{proposition}[theorem]{Proposition}

\theoremstyle{definition}
\newtheorem{definition}[theorem]{Definition}
\newtheorem{assumption}[theorem]{Assumption}
\theoremstyle{remark}
\newtheorem{remark}[theorem]{Remark}
\usepackage{fancyhdr}       
\usepackage{graphicx}
\usepackage{subcaption}
\usepackage{booktabs}
\usepackage{natbib}
\usepackage{amsmath,amssymb,amsthm,mathtools}
\usepackage{algorithm}
\usepackage{algorithmic}
\usepackage{hyperref}
\usepackage{xcolor}
\usepackage{longtable}
\graphicspath{{media/}}     
\newcommand{\Manifold}{\mathcal{M}}

\newcommand{\R}{\mathbb{R}}

\newcommand{\grad}{\operatorname{grad}}
\newcommand{\norm}[2]{\| #1 \|_{#2}}

\newcommand{\Retr}{R_\theta}
\newcommand{\Trans}[2]{\mathcal{T}_{#1 \rightarrow #2}}

\title{Riemannian Lyapunov Optimizer: A Unified Framework for Optimization}

\author{
  Yixuan Wang\thanks{Corresponding author.}, Omkar Sudhir Patil, Warren~E. Dixon\\
  Department of Mechanical and Aerospace Engineering\\ University of Florida\\
  \texttt{\{wang.yixuan, patilomkarsudhir, wdixon\}@ufl.edu} \\
}
\newcommand{\revise}[1]{{\textcolor{black}{#1}}}
\newcommand{\tanspace}[1]{T_{#1} \Manifold}
\newcommand{\gt}{g_\theta}
\newcommand{\yw}[1]{\textcolor{black}{#1}}
\begin{document}
\maketitle

\begin{abstract}
We introduce Riemannian Lyapunov Optimizers (RLOs), a family of optimization algorithms that unifies classic optimizers within one geometric framework.
Unlike heuristic improvements to existing optimizers, RLOs are systematically derived from a novel control-theoretic framework that reinterprets optimization as an extended state discrete-time controlled dynamical system on a Riemannian parameter manifold.
Central to this framework is the identification of a Normally Attracting Invariant Manifold (NAIM), which organizes training dynamics into two distinct stages: rapid alignment of the speed state to a target graph, followed by controlled evolution within it.
We formalize this by constructing a strict Lyapunov function that certifies convergence to a target manifold.
This perspective yields a constructive ``optimizer generator" that not only recovers classic algorithms but enables the principled design of RLOs.
We validate our theory via geometric diagnostics and demonstrate that grounding optimizer design in control theory yields state-of-the-art performance in large-scale benchmarks.
Overall, RLOs bridge control theory and modern machine learning optimization, providing a unified language and a systematic toolkit for designing stable, effective optimizers.
\end{abstract}
\section{Introduction}
The evolution of deep learning optimization has produced a vast ecosystem of algorithms designed to navigate complex loss landscapes \cite{keskar2016large, li2018visualizing, bottou2018optimization}.
This lineage includes Stochastic Gradient Descent (SGD) \cite{Robbins1951ASA, chaudhari2019entropy}, momentum methods \cite{POLYAK19641, pmlr-v28-sutskever13}, Nesterov’s Accelerated Gradient (NAG) \cite{botev2017nesterov}, and a diverse array of adaptive methods such as AdaGrad \cite{JMLR:v12:duchi11a}, RMSProp \cite{1370017282431050757}, Adam \cite{kingma2014adam}, AdamW \cite{loshchilov2017decoupled}, and Adafactor \cite{shazeer2018adafactor}. Recently, specialized techniques like Lion \cite{chen2023symbolic}, Shampoo \cite{gupta2018shampoo}, and Sophia \cite{liu2023sophia} have pushed the boundaries of efficiency in large scale training by utilizing sign based updates \cite{bernstein2018signsgd, karimireddy2019error} or second order information \cite{yao2021adahessian}.
While these methods are empirically excellent and form the backbone of modern machine learning, they are largely treated as a collection of distinct heuristics.
Theoretical analyses of these optimizers remain fragmented, often focusing on narrow properties or specific algorithmic artifacts \cite{reddi2019convergence, chen2018closing} while failing to provide a global, principled explanation for their shared success.

Previous attempts to unify accelerated methods often \yw{leverage} continuous time limits that approximate discrete iterations by second order damped dynamics, providing interpretable surrogate models for rate and stability \cite{su2015differential, wibisono2016variational}.
Other works have explored unified frameworks through the lens of proximal operators \cite{Boyd2011Distributed} or mirror descent \cite{BECK2003167}, yet these perspectives typically analyze algorithms through fixed time rescalings or simplified state representations\yw{,} and therefore\yw{,} do not directly synthesize feedback laws that explicitly regulate the evolution of auxiliary optimizer states.
As a result, they do not provide a constructive mechanism level explanation for the observed separation between fast residual alignment and slower descent, especially in discrete time with time varying preconditioning and stochastic gradients.
This lack of a cohesive geometric and control theoretic foundation limits our ability to systematically design new optimizers that are both stable and effective across different architectures.

In this paper, we answer the key open question: \textit{Is there a unified geometric principle that governs the stability and efficacy of these diverse optimizers?}
We introduce a unified framework that reinterprets optimization as a closed loop controlled dynamical system on a Riemannian manifold.
Our motivation is grounded in the observation of two timescale dynamics where the system state must track a specific relationship between its velocity and the gradient field.
We formalize this using the concept of a Normally Attracting Invariant Manifold (NAIM), which serves as the geometric skeleton relating the update speed to the target direction field.
Drawing inspiration from the idea of backstepping in nonlinear control theory \cite{slotine1991applied, krstic1995nonlinear, dixon2003control}, we construct a strict Lyapunov function that encodes both the objective value and the distance to the NAIM.
This approach allows us to derive a controller that actively forces the system to track the manifold, transforming optimizer design from heuristic experimentation into a principled synthesis of Lyapunov based feedback laws.
Our contributions lie on:

\textbf{(i) Unified Riemannian Geometry:}
We demonstrate that diverse optimization components are equivalent to fundamental geometric objects where preconditioning corresponds to selecting a Riemannian metric and momentum represents an extended velocity state in the tangent bundle.

\textbf{(ii) The NAIM Mechanism:} We introduce the NAIM as the primary structure organizing training dynamics, proving that optimization consists of a fast process of backstepping the speed state to a target graph followed by a slow drift along that manifold.

\textbf{(iii) Strict Lyapunov Design:} We provide a systematic methodology for designing the RLO family of optimizers by synthesizing control laws that satisfy strict Lyapunov requirements using a Riemannian backstepping approach, ensuring robust convergence even under the disturbances typical of stochastic gradients.

This framework provides a blueprint for optimization algorithm design from the perspective of controller design and paves the way for principled innovations.
The remainder of the paper is organized as follows: All the notaion used in this paper are summarized in Appendix \ref{app:notation}, Section \ref{sec:setup} introduces \yw{the} Riemannian configuration and build\yw{s} the optimization as a dynamical system.
Section \ref{sec:naim} and \ref{sec:uni} present our unified framework, and \yw{an} RLO family based on this framwork. 
Section \ref{sec:ablation} provides empirical validation of the framework.
And finally, Section \ref{sec:benchmarks} details large scale experimental results.
\section{Geometric Setup and Problem Formulation}
\label{sec:setup}
We study the minimization of a differentiable objective function $f\colon \Manifold\rightarrow \R$ over a smooth $n$ dimensional parameter manifold $\Manifold$ endowed with a Riemannian metric $g$.
The optimal value of $f$ is denoted by $f^*$, which may be unknown.
Discrete time is indexed by $k\in\mathbb{N}$.
\subsection{The Riemannian Configuration}
For each $\theta\in\Manifold$, let $\tanspace{\theta}$ denote the tangent space at $\theta$.
The Riemannian metric assigns an inner product $g(\cdot, \cdot)$ on $\tanspace{\theta}$, and the induced norm is \begin{align*}
    \norm{\xi}{\gt} \triangleq \sqrt{\gt(\xi,\xi)}, \quad \xi \in \tanspace{\theta}.
\end{align*}
The Riemannian gradient of $f$ at $\theta$, denoted $\grad f(\theta) \in \tanspace{\theta}$, is defined by the identity
\begin{align*}
    d f(\theta)[\xi] = \gt(\grad f(\theta), \xi)
\end{align*}
for all $\xi\in \tanspace{\theta}$, where $d f(\theta)[\xi]$ is the directional derivative of $f$ at $\theta$ along $\xi$.

To perform iterative optimization, we require a mechanism to map tangent vectors back onto the manifold and to compare vectors across distinct tangent spaces.
We employ a retraction $R$ and a vector transport $\mathcal{T}$ as our primary computational operators.\footnote{When $\Manifold = \R^n$ with the Euclidean metric, one has $\tanspace{\theta} = \R^n$, $\Retr(\xi) = \theta +\xi$ and $\Trans{\theta}{\varphi}(\xi) = \xi$.
All definitions above reduce to their standard vector space counterparts.
%
}
A retraction is a smooth mapping $R\colon \tanspace{} \rightarrow \Manifold$ such that for any $\theta$, the restriction $\Retr\colon \tanspace{\theta} \rightarrow \Manifold$ satisfies $R_\theta(0) = \theta$ and its local differential at the origin is the identity map.
%
%
Furthermore, for any two points $\theta$ and $\varphi = \Retr(\xi)$, the vector transport $\Trans{\theta}{\varphi}\colon \tanspace{\theta} \rightarrow \tanspace{\varphi}$ provides a linear mapping that moves a tangent vector from the source space to the destination space along the retraction curve.
%

%
%
%
\begin{figure}
    \centering
    \includegraphics[width=\linewidth]{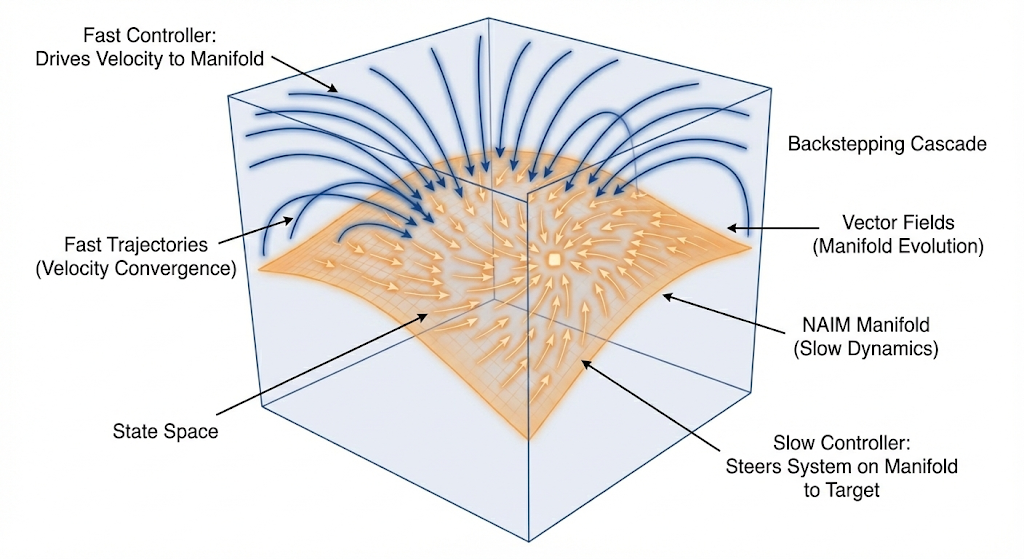}
    \caption{Geometric intuition of the NAIM Lyapunov framework.
    The orange surface represents the NAIM embedded in the extended state space $\theta, v$.
    Blue streamlines illustrate the fast dynamics: the velocity state $v$ rapidly contracts onto $\Lambda$ at a rate governed by the lifting parameter $\eta$, regardless of initial conditions.
    Orange arrows show the slow dynamics: once the trajectory reaches $\Lambda$ , the system evolves along the direction field $\Phi$ toward the target optimum at the center.
    %
    }
    \label{fig:intuition}
\end{figure}
\subsection{Optimization as an Extended Dynamical System}
We conceptualize the optimizer not merely as a rule for updating parameters, but as a controlled dynamical system evolving in an extended state space. We define the system state at time $k$ as the tuple $x_k = (\theta_k, v_k, y_k)$.
Here, $\theta_k \in \Manifold$ represents the current model parameters.
The variable $v_k \in \tanspace{\theta_k}$ represents the physical velocity or momentum of the system, residing in the tangent bundle.
The variable $y_k \in \mathcal{Y}$ represents the internal memory state of the optimizer, such as the accumulated first or second moments of the gradients, where $\mathcal{Y}$ is a vector space appropriate for the \revise{selected} statistics.

The evolution of this system is governed by \revise{an internal state transition map $\Psi$ and a direction field generator $\Phi$.}
The transition map $\Psi: \mathcal{Y} \times T\Manifold \to \mathcal{Y}$ updates the memory state based on the current stochastic gradient.
The direction field generator $\Phi: \mathcal{Y} \times T\Manifold \to T\Manifold$ constructs a target update direction $d_k = \Phi(y_k, g_k)$ in the tangent space $\tanspace{\theta_k}$.
This map $\Phi$ encapsulates the structural logic of the optimizer, including operations such as normalization, clipping, or sign-based transformations.
Consequently, a specific optimization algorithm is fully characterized by the tuple $(\Manifold, g, \Phi, \Psi)$ alongside a sequence of step sizes $h_k$ and lifting parameters $\eta_k$.
We rigorously demonstrate in Appendix \ref{app:precondition} that adaptive preconditioning methods in Euclidean space are mathematically equivalent to the selection of a specific time-varying Riemannian metric $g$, thereby bringing adaptive optimizers within this geometric framework.
\section{The NAIM-Lyapunov Framework}
\label{sec:naim}
Building upon the geometric formulation of the optimizer as an extended dynamical system, we adopt a constructive control perspective.
We begin with a generic open-loop mechanical system on the manifold and ask: \revise{\textit{What control law is required to force the system to track the NAIM while minimizing the objective function?}}
\revise{We show that the RLO algorithm is not an ad hoc design, but rather a discrete time realization of a Riemannian backstepping controller synthesized within the proposed framework under a strict Lyapunov requirement.}
Fig \ref{fig:intuition} illustrates the geometric intuition underlying the framework.
\subsection{Open-Loop Dynamics and the Geometric Objective}
We model the optimization process as a mechanical system on the tangent bundle $T\Manifold$.
The open-loop dynamics in continuous time are governed by the covariant equations:
\begin{align*}
    \dot{\theta} = v, \quad \nabla_{\dot{\theta}} v = u,
\end{align*}
where $\theta \in \Manifold$ is the position, $v \in \tanspace{\theta}$ is the velocity, $\nabla$ is the Levi-Civita connection associated with metric $g$, and $u \in \tanspace{\theta}$ is the control input (acceleration) we must design.
Our geometric objective is twofold.
First, we require the system to minimize the loss $f(\theta)$.
Second, we require the physical velocity $v$ to adhere to the direction field generator introduced in Section \ref{sec:setup}.
This defines the NAIM graph
\begin{align*}
    \Lambda = \{(\theta, v) \in \tanspace{\theta} \colon z \triangleq v - \Phi(y, \grad f) = 0\}.
\end{align*}
Here, $z \in \tanspace{\theta}$ is the normal residual.
The control problem is to synthesize an input $u$ that renders $\Lambda$ attractive and invariant while ensuring $f(\theta)$ decreases.
\subsection{Lyapunov-Based Control Synthesis}
\label{sec:sub lyapunov control}
To synthesize the control law $u$, we postulate a strict Lyapunov function candidate $V$ that encodes a weighted sum of the potential energy (loss) and the kinetic energy of the residual,
\begin{align*}
    V(\theta, v, y) = f(\theta) - f^\star + \frac{1}{2\lambda} \|z\|_g^2,
\end{align*}
where $\lambda > 0$ is a scalar gain parameter (related to the time-scale separation).
Taking the time derivative of $V$ along the trajectories of the open-loop system \revise{yields}
\begin{align*}
    \dot{V} = \langle \grad f, v \rangle_g + \frac{1}{\lambda} \langle z, \nabla_{\dot{\theta}} z \rangle_g.
\end{align*}
Substituting $v = \Phi + z$ and $\nabla_{\dot{\theta}} z = \nabla_{\dot{\theta}} v - \nabla_{\dot{\theta}} \Phi = u - \dot{\Phi}$, we obtain
\begin{align*}
    \dot{V} = \underbrace{\langle \grad f, \Phi \rangle_g}_{\text{drift}} + \langle \grad f, z \rangle_g + \frac{1}{\lambda} \langle z, u - \dot{\Phi} \rangle_g.
\end{align*}
The first term represents the natural descent along the direction field.
To guarantee stability ($\dot{V} < 0$), the control input $u$ must cancel the indefinite terms and enforce decay on $z$.
A sufficient condition for \yw{convergence} is to \revise{select} $u$ \revise{using the principle of geometric backstepping, where the reference velocity is designed as a virtual control input given by the direction field on NAIM.
The residual $z$ then represents the backstepping error, which dynamically couples the position and velocity subsystems through the interaction term $\langle \grad f, z \rangle_g$.
To render the Lyapunov derivative $\dot{V}$ strictly negative, the actual control input $u$ must be synthesized to cancel this destabilizing cross-coupling while simultaneously injecting dissipation into the error dynamics.
Accordingly, we choose the control law}:
\begin{align*}
    u = \underbrace{\nabla_{\dot{\theta}}\Phi}_{\text{feedforward}} \underbrace{- \lambda \grad f}_{\text{descent coupling}} \underbrace{- \frac{1}{\tau} z}_{\text{fiber contraction}}.
\end{align*}
Here, $\tau > 0$ is the relaxation time constant.
\revise{The first term $\nabla_{\dot{\theta}}$ provides the necessary feedforward acceleration to track the evolving target manifold; the second term $- \lambda \grad f$ is the specific backstepping feedback required to nullify the cross-term $\langle \grad f, z \rangle_g$; and the final term $- z/\tau$ injects strict damping to contract the residual fiber at a rate determined by the time constant $\tau$.}

Substituting this control law back into the residual dynamics yields $\nabla_{\dot{\theta}} z = -\frac{1}{\tau} z - \lambda \grad f$.
For small $\tau$ (fast timescale) and small coupling $\lambda$, the dominant behavior is exponential decay: $\nabla_{\dot{\theta}} z \approx -\frac{1}{\tau} z$.

Substituting the synthesized control $u$ back into the open-loop plant $\nabla_{\dot{\theta}} v = u$, we recover the closed-loop dynamics.
To implement this, we apply a first-order Euler discretization with time step $h_k$.
The continuous relaxation $1/\tau$ maps to the discrete lifting parameter $\eta_k \in (0,1]$, and the term $\lambda \grad f$ is absorbed into the direction field definition.
The continuous fiber contraction $\nabla_{\dot{\theta}} z = -\frac{1}{\tau} z$ discretizes precisely to the update rule
\begin{align*}
    z_{k+1} \approx (1 - \eta_k) z_k,\quad
    \tilde{v}_{k+1} = (1-\eta_k)v_k + \eta_k \Phi(y_k, g_k).
\end{align*}
\revise{where $\tilde{v}_{k+1}$ is the lifted speed update computed from $v_k$ and $v_{k+1}$ is subsequently updated based on $\tilde{v}_{k+1}$.}
The position update $\dot{\theta} = v$ discretizes via the retraction:
\begin{align*}
    \theta_{k+1} = R_{\theta_k}(-h_k \tilde{v}_{k+1}).
\end{align*}
Thus, the optimization algorithm designed based on this framework is the numerical \revise{approximation} of the feedback controller required to stabilize the NAIM.
\subsection{Lyapunov Stability and Uniform Ultimate Boundedness}
In practical settings, the feedforward term $\nabla_{\dot{\theta}}\Phi$ is often approximated \revise{using stochastic gradients, which introduces disturbances in the resulting closed-loop dynamical system.}
To certify stability under these non-vanishing disturbances, we 
%
analyze the discrete Lyapunov difference $\Delta V_k := V_{k+1} - V_k$.
\yw{The goal is to develop a} deterministic guarantee that trajectories are confined within a compact set, referred to as a \revise{``}thick tube," even under worst-case bounded disturbances.
\begin{theorem}
\label{thm:uub}
     Consider the RLO system under \ref{assum:L smooth}, \ref{assum:bounded} and \ref{assum:descent align} regarding smoothness, bounded disturbances, and descent alignment.
     Consider the discrete RLO dynamics with step size $h_k$ and lifting parameter $\eta_k \in (0, 1]$. Then the following results are obtained.
     
        \textbf{Thick Tube.}
          There exists a weighting constant $\alpha > 0$ such that for sufficiently small $h_k$, the discrete Lyapunov function candidate
          \begin{align*}
              V_k(\theta_k, z_k) \triangleq f(\theta_k) - f^\star + \frac{\alpha}{h_k} \|z_k\|_{g_k}^2
          \end{align*}
          satisfies the difference inequality:
          \begin{align*}
              V_{k+1} - V_k &\leq - c_1 h_k \|\grad f(\theta_k)\|_g^2\\
              &- c_2 \frac{\eta_k}{h_k} \|z_k\|_g^2 + c_3 \frac{1}{\eta_k h_k} \|\delta_k\|_g^2,
          \end{align*}
          where $c_1, c_2, c_3 > 0$ are \yw{constants}.
          This implies that the residual $z_k$ is uniformly bounded by a region proportional to the forcing magnitude $\|\delta_k\|_g$, effectively confining the trajectory to a ``thick tube" around $\Lambda$.
          
          \textbf{Uniformly Ultimate Boundedness.}
          If, in addition, $f$ satisfies the Polyak-Lojasiewicz (PL) condition (\ref{def:plcondition}) with constant $\mu_{\text{PL}} > 0$, then the Lyapunov function $V_k$ converges linearly to a noise floor determined by:
          \begin{align*}
              V_{k+1} \leq (1 - \rho) V_k + \frac{c_3}{\eta_k h_k} \| \delta_k\|_g^2,
          \end{align*}
          where $\rho \in (0, 1)$ is the linear contraction rate.
          Consequently, the optimization error is ultimately bounded by:
          \begin{align*}
              \limsup_{k \to \infty} (f(\theta_k) - f^\star) \leq \limsup_{k \to \infty} V_k \le \frac{c_3}{\rho \eta_k h_k} \sup_{t \ge k} \|\delta_t\|_g^2.
          \end{align*}
\end{theorem}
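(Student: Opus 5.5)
The plan is a standard one-step Lyapunov descent argument on the two components of $V_k$, followed by Young's inequality to absorb the cross terms, and then a PL-based recursion.

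\textbf{Setting up the residual recursion.} Write the residual as $z_k = v_k - \Phi(y_k,\grad f(\theta_k))$. Write the stochastic direction as $\Phi(y_k,g_k) = \Phi(y_k,\grad f(\theta_k)) + \delta_k$, where $\delta_k$ is the bounded disturbance of Assumption~\ref{assum:bounded}. The lifted update $\tilde v_{k+1} = (1-\eta_k)v_k + \eta_k\Phi(y_k,g_k)$ then gives $\tilde v_{k+1} = \Phi_k + (1-\eta_k)z_k + \eta_k\delta_k$. Transporting to $\theta_{k+1}$ yields
\begin{align*}
z_{k+1} = \Trans{\theta_k}{\theta_{k+1}}\big((1-\eta_k)z_k + \eta_k\delta_k\big) + e_k .
\end{align*}
Here $e_k$ collects the change of $\Phi$ between iterates, i.e. the discretization of the feedforward term $\nabla_{\dot\theta}\Phi$. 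Assuming $\Phi$ is Lipschitz and the transport is near-isometric (part of Assumption~\ref{assum:L smooth}), we get $\|e_k\|_g \le L_\Phi h_k\|\tilde v_{k+1}\|_g$. Squaring and applying Young's inequality with weight $\eta_k$ gives
\begin{align*}
\|z_{k+1}\|^2 \le (1-\eta_k)\|z_k\|^2 + C\eta_k\|\delta_k\|^2 + C\eta_k^{-1}h_k^2\|\tilde v_{k+1}\|^2 .
\end{align*}

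\textbf{Loss decrease.} By $L$-smoothness along the retraction,
\begin{align*}
f(\theta_{k+1}) \le f(\theta_k) - h_k\langle \grad f,\tilde v_{k+1}\rangle + \tfrac{L}{2}h_k^2\|\tilde v_{k+1}\|^2 .
\end{align*}
Substitute the decomposition of $\tilde v_{k+1}$. Descent alignment (Assumption~\ref{assum:descent align}) gives $\langle\grad f,\Phi\rangle \ge a\|\grad f\|^2$ and $\|\Phi\|\le b\|\grad f\|$. The cross terms $h_k\langle\grad f,z_k\rangle$ and $h_k\eta_k\langle\grad f,\delta_k\rangle$ are split by Young's inequality into $\tfrac a4 h_k\|\grad f\|^2$ plus $C h_k\|z_k\|^2$ and $C h_k\eta_k^2\|\delta_k\|^2$.

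\textbf{Combining with weight $\alpha/h_k$.} Multiply the residual recursion by $\alpha/h_k$ and add it to the loss inequality. I treat $h_k$ as locally constant; if it varies, I would also assume $h_{k+1}\ge h_k$ so that the weight does not increase. The residual part contributes $-\alpha\eta_k h_k^{-1}\|z_k\|^2$. This must dominate the leaked term $C h_k\|z_k\|^2$ from the loss step, which holds if $h_k^2 \le \alpha\eta_k/(2C)$. That is the ``sufficiently small $h_k$'' condition.

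The term $C\alpha\eta_k^{-1}h_k\|\tilde v_{k+1}\|^2$ is bounded by $3C\alpha\eta_k^{-1}h_k(b^2\|\grad f\|^2 + \|z_k\|^2 + \|\delta_k\|^2)$. Choosing $\alpha$ and then $h_k$ small relative to $\eta_k$ absorbs the gradient and $z$ parts. What remains is a disturbance term of order $(\alpha\eta_k/h_k + \alpha h_k/\eta_k + h_k)\|\delta_k\|^2$. This is bounded by $c_3\|\delta_k\|^2/(\eta_k h_k)$ whenever $h_k\le1$ and $\eta_k\le1$. This yields the thick-tube inequality.

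\textbf{Main obstacle.} The delicate point is this bookkeeping. The cross term $\langle\grad f,z_k\rangle$ and the feedforward error $e_k$ couple the two timescales, so I must pick $\alpha$ and the smallness of $h_k$ relative to $\eta_k$ in an order that avoids circularity. I will first fix $\alpha$ using only $a$ and $b$. Then I will choose $h_k \le \bar h\sqrt{\eta_k}$ with $\bar h$ depending on $\alpha, L, L_\Phi$.

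\textbf{UUB under PL.} Under the PL condition, $\|\grad f\|^2 \ge 2\mu_{\text{PL}}(f-f^\star)$. Hence
\begin{align*}
c_1h_k\|\grad f\|^2 + c_2\tfrac{\eta_k}{h_k}\|z_k\|^2 \ge \rho V_k, \qquad \rho=\min\{2\mu_{\text{PL}}c_1h_k,\; c_2\eta_k/\alpha\} .
\end{align*}
We may shrink $\rho$ to lie in $(0,1)$. This gives $V_{k+1}\le(1-\rho)V_k + c_3\|\delta_k\|^2/(\eta_kh_k)$. Unrolling the geometric recursion and taking $\limsup$, with $f-f^\star\le V_k$, gives the stated noise floor $c_3\sup_t\|\delta_t\|^2/(\rho\eta_kh_k)$, interpreting $h_k,\eta_k$ as their limiting or lower-bound values.
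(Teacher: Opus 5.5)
Your skeleton (descent lemma along the retraction, a residual recursion, weighting by $\alpha/h_k$, Young on the cross term, then PL) is the paper's. You split the forcing differently, though, and the new piece does not hold. You take $\delta_k$ to be only the stochastic-gradient error. You put the motion of the target into $e_k=\mathcal{T}_{\theta_k\to\theta_{k+1}}\bigl(\Phi(y_k,\grad f(\theta_k))\bigr)-\Phi(y_{k+1},\grad f(\theta_{k+1}))$, which you bound by $L_\Phi h_k\|\tilde v_{k+1}\|_g$.

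That bound treats $\Phi$ as a function of $\theta$ alone. But $e_k$ also contains the memory update $y_{k+1}=\Psi(y_k,\hat g_k)$, e.g.\ $m_{k+1}-m_k=(1-\beta)(\hat g_k-m_k)$, which does not shrink with $h_k$. So the bound fails for every member whose target depends on the memory: AdamW, Lion, and all three RLO variants. Lipschitzness of $\Phi$ also rules out $\operatorname{sign}$. The $O(1)$ part of $e_k$ then enters your combined inequality as a term of order $\alpha\,\eta_k^{-1}h_k^{-1}\|e_k\|^2$. Nothing controls it: not $\|\grad f\|^2$, not $\|z_k\|^2$, and not your $\|\delta_k\|^2$.

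The paper avoids this entirely. It measures the residual against the actual target, $z_k=v_k-d_k$, and takes the whole drift as the forcing, $\delta_k:=d_{k+1}-\mathcal{T}_{\theta_k\to\theta_{k+1}}(d_k)$, with noise, memory update and motion of $\theta$ lumped together. Then $z_{k+1}=(1-\eta_k)\mathcal{T}(z_k)-\delta_k$ holds exactly. Young with weight $\eta_k$ gives $\|z_{k+1}\|^2\le(1-\eta_k)\|\mathcal{T}(z_k)\|^2+\eta_k^{-1}\|\delta_k\|^2$. The resulting factor $\tfrac{\alpha}{h_k}\cdot\tfrac{1}{\eta_k}$ is exactly the $c_3/(\eta_k h_k)$ of the statement, with no Lipschitz hypothesis and no feedforward bookkeeping; the only coupling left is $\langle\grad f,z_k\rangle$. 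Moving the memory drift (or all of $e_k$) into the disturbance repairs your argument and essentially turns it into the paper's.

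There are two further corrections. First, even where your $e_k$ bound is valid, the constants are off. The gradient part of $\tfrac{\alpha}{h_k}\,C\eta_k^{-1}h_k^2\|\tilde v_{k+1}\|^2$ (with $C\approx 3L_\Phi^2$) is $3C\alpha b^2\eta_k^{-1}h_k\|\grad f\|^2$. This is of the same order in $h_k$ as the descent term $a h_k\|\grad f\|^2$, so shrinking $h_k$ does not help. You need $\alpha\lesssim a\eta_k/(L_\Phi^2 b^2)$. Hence $\alpha$ cannot be fixed from $a,b$ alone, and a single constant $\alpha$ requires $\inf_k\eta_k>0$. With that $\alpha$, both the $\langle\grad f,z_k\rangle$ leak and the $e_k$-induced $\|z_k\|^2$ term force $h_k\lesssim\eta_k$, not $h_k\lesssim\sqrt{\eta_k}$.

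Second, $\|\Phi\|\le b\|\grad f\|$ is not one of the paper's hypotheses; it has only $\|d_k\|_g\le D_{\max}$. Your assumption fails for sign or globally normalized fields. It is, though, what lets you absorb $\tfrac{L}{2}h_k^2\|\Phi\|^2$. With only $\|d_k\|\le D_{\max}$ one is left with a constant $L_g h_k^2 D_{\max}^2$. That constant has no counterpart in the claimed inequality, and the paper's proof simply drops it when forming $\Delta V_k$. So keep your assumption, but state it explicitly as an added hypothesis.
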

A complete proof, together with all required assumptions, is provided in Appendix \ref{app:proof}.
This result rigorously validates the \revise{``}Thick Tube" mechanism: the optimizer establishes a dynamic equilibrium where the Lyapunov contraction balances the disturbances.
The tube radius $\sqrt{\delta_{\max}}$ is explicitly minimized by the smoothness of $\Phi$, theoretically justifying why smooth direction fields yield lower steady-state loss than discontinuous ones.
\section{Unification of Modern Optimizers}
\label{sec:uni}
The constructive control analysis in Section \ref{sec:naim} yields a specific discrete-time feedback controller required to stabilize the NAIM.
We formally encapsulate this control logic into the RLO, presented in Algorithm \ref{alg:rlo}.
This algorithm serves not merely as a new method, but as a universal computational template determined by four geometric components: the metric $g$, the internal state transition $\Psi$, the direction field generator $\Phi$, and the lifting parameter $\eta$.
\begin{algorithm}[t]
   \caption{Riemannian Lyapunov Optimizer (RLO)}
   \label{alg:rlo}
\begin{algorithmic}[1]
   \STATE {\bfseries Input:} Initial $\theta_0$, $v_0=0$, state $y_0$. Metric $g$, Map $\Phi$, Transition $\Psi$.
   \STATE {\bfseries Hyperparameters:} Step sizes $\{h_k\}$, lifting rates $\{\eta_k\}$.
   \FOR{$k = 0, 1, \dots, K-1$}
      \STATE \textbf{// Geometric Phase (Target Construction)}
      \STATE Obtain stochastic gradient $\hat{g}_k \in T_{\theta_k}\mathcal{M}$.
      \STATE $y_{k+1} \gets \Psi(y_k, \hat{g}_k)$ \hfill \COMMENT{Update internal memory}
      \STATE $d_k \gets \Phi(y_{k+1}, \hat{g}_k)$ \hfill \COMMENT{Define target vector on $\Lambda_k$}
      
      \STATE \textbf{// Dynamic Phase (Manifold Tracking)}
      \STATE $\tilde{v}_{k+1} \gets (1 - \eta_k) v_k + \eta_k d_k$ \hfill \COMMENT{Fiber contraction}
      \STATE $\theta_{k+1} \gets R_{\theta_k}(-h_k \tilde{v}_{k+1})$ \hfill \COMMENT{Parameter update}
      \STATE $v_{k+1} \gets \mathcal{T}_{\theta_k \to \theta_{k+1}}(\tilde{v}_{k+1})$ \hfill \COMMENT{Vector transport}
   \ENDFOR
\end{algorithmic}
\end{algorithm}
\subsection{The RLO Template}
Algorithm~\ref{alg:rlo} proceeds in two distinct phases that mirror the two time scales of the dynamics. 

The \textbf{Geometric Phase} (Lines 3--5) constructs the target manifold $\Lambda_k$.
Here, the internal state transition $\Psi$ updates the optimizer's belief about the landscape geometry (e.g., moment estimation), while the generator $\Phi$ constructs the target velocity field $d_k$.
This \yw{phase} encapsulates \textit{where} the system should ideally move.

The \textbf{Dynamic Phase} (Lines 6--8) enforces the \yw{convergence update}.
The physical velocity $v_k$ relaxes toward $d_k$ via the fiber contraction rate $\eta_k$, and the parameters update along the retracted geodesic.
This structure strictly enforces the separation of time scales: $\Phi$ defines the manifold, while $\eta$ defines the attraction rate.
\subsection{Unification via Geometric Components}
By instantiating the components $(\Manifold, g, \Phi, \eta)$, Algorithm~\ref{alg:rlo} reproduces standard optimizers.
We provide a comprehensive mapping in Table~\ref{tab:unification_full} (Appendix~\ref{app:unification}), but highlight the structural equivalences here\yw{.}

\textbf{Adaptive Preconditioning as Metric Selection.}
Methods like Adam and RMSProp are rigorously recovered by selecting a time-varying Riemannian metric $g_k = \text{diag}(\sqrt{s_k})^{-1}$, where $s_k$ is the second moment estimate.
Under this metric, the standard Adam update is not a heuristic scaling, but the canonical fiber contraction evolving on a geometry that adapts to the local curvature.

\textbf{Symbolic Optimizers as \revise{Nonlinear Gradient} Fields.}
Algorithms like Lion utilize the sign operator, breaking the link between update magnitude and gradient norm.
In our framework, this corresponds to a nonlinear direction field $\Phi(y, g) = \text{sign}(y)$.
While this implies $\Lambda$ is now a nonlinear graph, the Lyapunov stability analysis remains valid provided the forcing term $\delta_k$ remains bounded.

\subsection{The RLO Family}
To empirically validate the geometric principles of NAIM, we investigate three distinct algorithms derived from the proposed framework; detailed formulations and hyperparameter settings are provided in Appendix \ref{app:unification}.
The primary instantiation, RLO-Lifted, fully realizes the two-time-scale dynamics inherent to the expanded phase space.
This algorithm defines the target manifold $\Lambda$ via a smooth, bounded mapping—specifically a hyperbolic tangent applied to the first moment estimate—and maintains an explicit velocity state $v_k$ that relaxes toward $\Lambda$ governed by the lifting parameter $\eta$.

To isolate the contribution of the inertial lifting mechanism, we evaluate a degenerate variant denoted simply as RLO.
This algorithm represents the greatest contraction where the velocity alignment timescale vanishes ($\eta \to 1$), forcing the trajectory to evolve directly along the vector field defined by the target graph without the stabilization provided by the fast variable.

Finally, to demonstrate the framework's capacity to incorporate local geometry into the invariant manifold definition, we introduce RLO-$\Lambda$.
This variant constructs a more sophisticated target graph $\Phi$ by integrating second-order moment estimates into the mapping; effectively, this \yw{variant} acts as a geometric preconditioner that reshapes $\Lambda$ to normalize curvature, resulting in a smoother and more regularized target vector field compared to the standard RLO.
\section{Geometric Validation of the NAIM Framework}
\label{sec:geometric_validation}

Before demonstrating the performance of RLO on large scale benchmarks, we first validate the theoretical predictions of our framework through carefully designed diagnostic experiments.
The goal of this section is twofold: (i) to disentangle the contributions of different algorithmic components through systematic ablation and (ii) to conduct a hyperparameter search on a small dataset to understand the effect of changing learning rate, batch size, and global normalization.
All experiments in this section use ResNet-18 \cite{he2016deep} on CIFAR-10, with comprehensive hyperparameter details provided in Appendix~\ref{app:exp_setup} together with \yw{an} $\eta$ ablation experiment.
\subsection{Isolating the NAIM Mechanism}
\label{sec:ablation}
\textbf{Global Normalization and Contraction Rate.} A natural question arises: does the empirical success of RLO stem from the NAIM geometric structure, or is it merely a consequence of the global normalization that rescales update magnitudes?
To answer this question rigorously, we design a factorial ablation based on RLO-Lifted (Algorithm \ref{alg:smooth_lifted_rlo}) that independently varies two factors: the presence of global normalization and the choice of lifting parameter $\eta$.

The global normalization computes $\text{scale} = \sqrt{D}/\|s\|$ where $D$ is the total parameter count and $s = \tanh(\gamma c)$ is the pre-normalized direction.
This operation projects the update onto a sphere of radius $\sqrt{D}$, effectively decoupling the update magnitude from the gradient norm.
To isolate its contribution, we compare four variants in a $2 \times 2$ factorial design: RLO-Lifted ($\eta = 0.7$) versus Nolifted ($\eta = 1.0$, where the velocity immediately equals the target direction), crossed with enabled versus disabled \yw{the global normalization}.

\begin{table}[t]
\centering
\caption{Factorial ablation of RLO-Lifted on CIFAR-10 with ResNet-18 trained for 50 epochs.
Each variant is evaluated at its optimal learning rate determined by grid search (see Section \ref{sec:hyperparameter}).
The Nolifted variants set $\eta=1$, eliminating the explicit velocity state so that $v_k = d_k$ at every step.
GN denotes global normalization\yw{, LR denotes the learning rate and Acc denotes accuracy} in this and all subsequent tables.
}
\label{tab:ablation_main}
\vspace{0.5em}
\begin{tabular}{lcccc}
\toprule
$\eta$ & GN & Optimal LR & Best Acc& Final Acc\\
\midrule
$0.7$ & \checkmark & $3 \times 10^{-5}$ & 91.59\% & 91.21\% \\
$0.7$ & $\times$ & $3 \times 10^{-3}$ & 89.76\% & 89.42\% \\
$1$ & \checkmark & $3 \times 10^{-5}$ & 91.51\% & 91.13\% \\
$1$ & $\times$ & $3 \times 10^{-3}$ & 89.02\% & 88.67\% \\
\bottomrule
\end{tabular}
\end{table}

Table~\ref{tab:ablation_main} presents the main ablation results.
Several observations deserve attention.
First, the optimal learning rate differs by two orders of magnitude between normalized and unnormalized variants: $3 \times 10^{-5}$ with \yw{global normalization} versus $3 \times 10^{-3}$ without.
This confirms that global normalization acts as an implicit learning rate multiplier, amplifying the effective step size by a factor proportional to $\sqrt{D}/\|s\|$.
For our ResNet-18 architecture with approximately 11 million parameters, this factor is on the order of $10^2$.

Second, and more importantly, when each variant is evaluated at its respective optimal learning rate, the performance gap between normalized and unnormalized configurations is modest: 1.83 percentage points for RLO-Lifted and 2.49 points for the Nolifted variant.
This \yw{outcome} demonstrates that global normalization is not the primary source of optimization efficacy.
Rather, it serves as a convenient mechanism for automatic learning rate adaptation that can be substituted by manual tuning without fundamental loss of performance.

Third, the comparison between RLO-Lifted and Nolifted ($\eta=1$) reveals minimal difference in accuracy.
At first glance, this might suggest that the lifting mechanism provides no benefit.
However, we interpret this finding differently: setting $\eta=1$ corresponds to the limiting case of infinitely fast fiber contraction, where the velocity instantaneously aligns with the target direction at every step.
The comparable performance indicates that for this benchmark, the NAIM tracking is highly effective regardless of whether alignment occurs gradually ($\eta < 1$) or immediately ($\eta = 1$).
Extended analysis of the lifting parameter is provided in Appendix \ref{app:eta_ablation}.

\textbf{Direction Field Smoothness.}
Our theoretical analysis in Section \ref{sec:naim} predicts that smoother direction field generators $\Phi$ should yield more stable optimization by reducing the drift term $\delta_k$ in the Lyapunov bound.
To test this prediction, we modify RLO-Lifted to compare its default smooth hyperbolic tangent $\Phi = \tanh(\gamma \cdot)$ against the discontinuous sign function $\Phi = \text{sign}(\cdot)$ used in Lion and the base RLO variant.

\begin{table}[t]
\centering
\caption{Effect of direction field smoothness on CIFAR-10. The smooth tanh mapping significantly outperforms the discontinuous sign function when global normalization is disabled.}
\label{tab:smoothness}
\vspace{0.5em}
\begin{tabular}{lcccc}
\toprule
$\Phi$ & GN & LR & Best Acc& Final Acc\\
\midrule
$\tanh(\gamma c)$ & \checkmark & $10^{-4}$ & 91.69\% & 91.49\% \\
$\text{sign}(c)$ & \checkmark & $10^{-4}$ & 91.91\% & 91.91\% \\
\midrule
$\tanh(\gamma c)$ & $\times$ & $3 \times 10^{-3}$ & 90.74\% & 90.74\% \\
$\text{sign}(c)$ & $\times$ & $3 \times 10^{-3}$ & 87.15\% & 74.43\% \\
\bottomrule
\end{tabular}
\end{table}

Table~\ref{tab:smoothness} reveals an interesting asymmetry.
Under global normalization, both direction fields achieve comparable accuracy, with the sign function marginally outperforming tanh by 0.22 points.
However, when global normalization is removed, the smooth tanh mapping dramatically outperforms the discontinuous sign function: 90.74\% versus 87.15\% at peak, with an even larger gap at convergence (90.74\% versus 74.43\%).
This 3.59 percentage point difference, and the severe degradation in final accuracy for the sign variant, confirms that smooth direction fields provide substantially more robust optimization when the implicit regularization of global normalization is absent.
We provide detailed analysis of this phenomenon in Appendix~\ref{app:phi_ablation}.
\subsection{Hyperparameter Sensitivity Analysis}
\label{sec:hyperparameter}
To understand the interaction between global normalization and standard hyperparameters, we conduct a comprehensive grid search over learning rates and batch sizes for each of the four factorial variants.

\begin{figure*}[t]
\centering
\includegraphics[width=\linewidth]{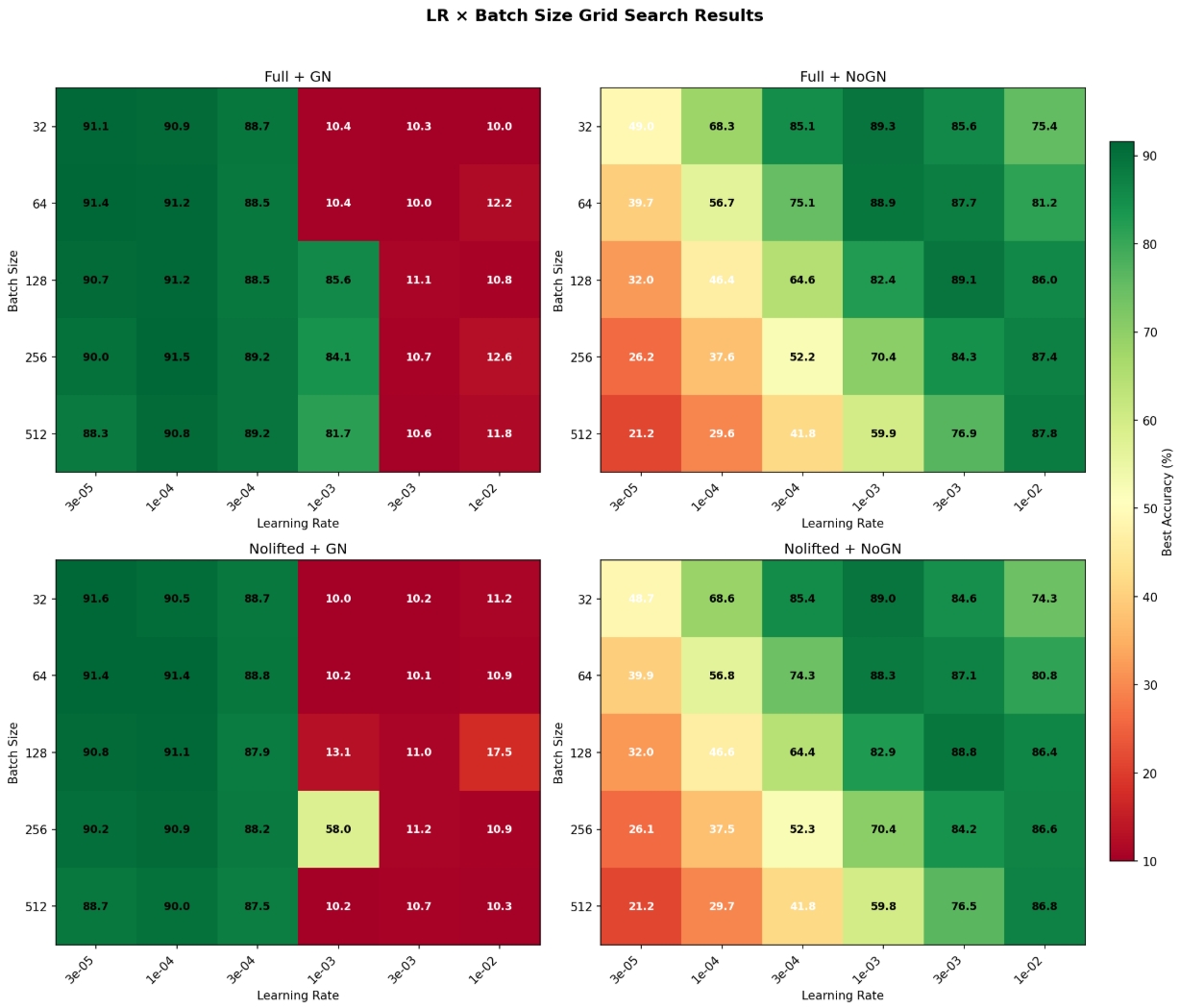}
\caption{Hyperparameter sensitivity heatmaps for the four factorial variants. Color intensity indicates test accuracy (\%).
\yw{The global normalization enabled} variants (left column) achieve peak performance at small learning rates ($\sim 3 \times 10^{-5}$), while \yw{the without global normalization variants} (right column) require large learning rates ($\sim 3 \times 10^{-3}$).
The optimal regions differ by approximately two orders of magnitude in the learning rate axis.}
\label{fig:heatmap}
\end{figure*}

Figure~\ref{fig:heatmap} visualizes the results as heatmaps.
The most striking pattern is the systematic shift in optimal learning rate between normalized and unnormalized variants.
With \yw{global normalization} enabled, peak accuracy occurs in the leftmost columns corresponding to learning rates around $3 \times 10^{-5}$, and performance collapses entirely for learning rates exceeding $10^{-3}$.
Without \yw{global normalization}, this pattern inverts: small learning rates yield accuracy near random chance (around 32\%), while competitive performance requires learning rates of $10^{-3}$ or larger.

Beyond the learning rate shift, we observe that all the variants are not sensitive to the batch size.
This interaction may reflect the relationship between gradient noise variance, and the regularizing effect of global normalization.

The heatmaps also reveal sharp stability boundaries for the \yw{global normalization enabled} variants.
At learning rates exceeding $3 \times 10^{-4}$, accuracy drops precipitously to random chance levels, indicating that the effective step size has exceeded the basin of attraction.
This boundary corresponds to the point where the implicit amplification factor $\sqrt{D}/\|s\|$ pushes the actual parameter displacement beyond the region where the loss landscape can be locally approximated.
In contrast, the \yw{without global normalization} variants degrade more gracefully at both extremes of the learning rate range, suggesting that explicit control over step size provides more predictable behavior even if it requires more careful tuning.
We provide further details in Appendix \ref{app:exp_setup} and additional ablation analysis in Appendix \ref{app:phi_ablation}.
%
%
%
%
%

%
With this foundation established, Section \ref{sec:benchmarks} evaluate\yw{s} RLO on large scale benchmarks where computational constraints preclude exhaustive hyperparameter search.
\section{Large-Scale Benchmarks}
\label{sec:benchmarks}
Having validated the geometric mechanisms underlying RLO in Section~\ref{sec:geometric_validation}, we now evaluate its performance on large-scale benchmarks that test whether these principles translate to practical gains.
We follow the experimental protocol established by \citep{chen2023symbolic}, comparing RLO variants against AdamW and Lion on ImageNet classification \cite{russakovsky2015imagenet} with both convolutional and transformer architectures.
%
%
Complete hyperparameter configurations and training details are provided in Appendix~\ref{app:benchmark_setup}.
\subsection{ImageNet Classification}
We evaluate three architectures that span different inductive biases and model scales: ResNet-50 representing convolutional networks, ViT-S/16 \cite{touvron2021training}, and ViT-B/16 \cite{dosovitskiy2020image}.
All models are trained from scratch for 90 epochs on ImageNet-1K with a global batch size of 1024, using cosine learning rate decay with linear warmup during the first 5 epochs.
Following the protocol \yw{in \citep{chen2023symbolic}}, we tune learning rates and weight decay independently for each optimizer to ensure fair comparison.
\yw{The test accuracy curves for all three models are presented in Fig \ref{fig:imagenet}.}
\begin{table}[t]
\centering
\caption{ImageNet-1K classification results (Top-1 accuracy \%).
All models trained for 90 epochs with cosine learning rate schedule.
Best results in \textbf{bold}, second best \underline{underlined}.
%
}
\label{tab:imagenet}
\vspace{0.5em}
\begin{tabular}{lccc}
\toprule
Optimizer & ResNet-50 & ViT-S/16 & ViT-B/16\\
\midrule
 AdamW & 73.87 & 75.18 & 71.42\\
 LION & 73.36 & 75.14 & \underline{76.27}\\
 \midrule
 RLO & 73.45 & \underline{75.38} & 76.00\\
 RLO-$\Lambda$  & \textbf{73.98} & \textbf{76.18} & \textbf{76.47}\\
 RLO-Lifted & \textbf{73.98} & 71.43 & 76.33\\
\bottomrule
\end{tabular}
\end{table}
Table~\ref{tab:imagenet} presents the main classification results.
Several patterns emerge from this comparison that illuminate both the strengths and the boundaries of our framework.
\begin{figure*}[t]
\centering
\includegraphics[width=0.9\linewidth]{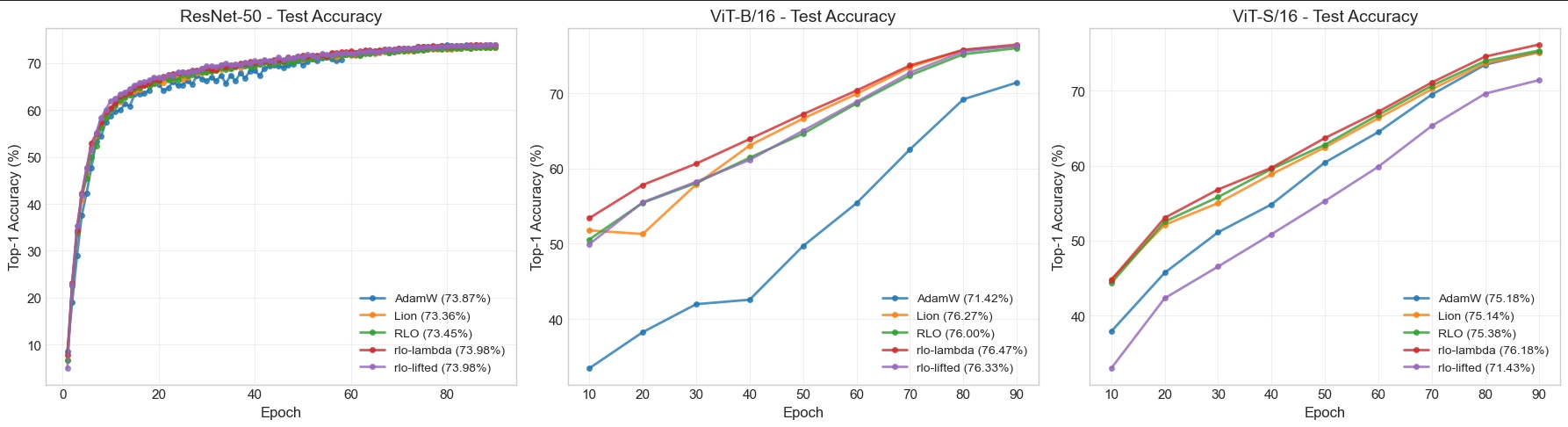}
\caption{Validation accuracy curves for ImageNet classification using ResNet-50 (left), ViT-B/16 (center), and ViT-S/16 (right).
RLO-$\Lambda$ consistently achieves the highest final accuracy across all architectures (73.98\%, 76.47\%, 76.18\% respectively).
Notably, sign-based optimizers (Lion, RLO variants) demonstrate substantially faster convergence and higher final accuracy than AdamW on Vision Transformers, suggesting that the NAIM-guided updates are particularly effective for attention-based architectures.
}
\label{fig:imagenet}
\end{figure*}

\textbf{Convolutional Networks.}
On ResNet-50, all optimizers achieve similar performance within a narrow range of 0.62 percentage points.
RLO-$\Lambda$ and RLO-Lifted tie for the best accuracy at 73.98\%, marginally outperforming AdamW (73.87\%), RLO (73.45\%), and Lion (73.36\%).
This near-parity is consistent with prior findings that convolutional architectures are relatively insensitive to optimizer choice when hyperparameters are well-tuned, likely because the strong inductive biases of convolution constrain the optimization landscape to be relatively benign regardless of the descent direction.

\textbf{Vision Transformers.}
The transformer architectures reveal more pronounced differences that highlight the importance of our geometric framework.
On ViT-S/16, RLO-$\Lambda$ achieves 76.18\%, outperforming AdamW by 1.0 percentage point, Lion by 1.04 points, and the base RLO by 0.8 points.
The advantage of RLO-$\Lambda$ is even more pronounced when considering the gap to the worst-performing method: RLO-Lifted achieves only 71.43\% on this architecture, a deficit of 4.75 points relative to RLO-$\Lambda$.

On the larger ViT-B/16, the ranking shifts notably.
RLO-$\Lambda$ again achieves the highest accuracy at 76.47\%, but now RLO-Lifted recovers to 76.33\%, closely followed by Lion at 76.27\% and base RLO at 76.00\%.
Most strikingly, AdamW substantially underperforms at 71.42\%, a gap of over 5 percentage points compared to RLO-$\Lambda$.
This dramatic difference aligns with observations in \citep{chen2023symbolic} that sign-based optimizers can significantly outperform adaptive methods on vision transformers when training for a fixed number of epochs.

The consistent strong performance of RLO-$\Lambda$ across all three architectures supports the theoretical framework developed in Sections \ref{sec:naim} and \ref{sec:uni}.
The adaptive preconditioning in RLO-$\Lambda$, which constructs the target manifold $\Lambda$ using second-moment information, appears to provide robust benefits across different model families.
The scale-dependent behavior of RLO-Lifted, which performs well on ResNet-50 and ViT-B/16 but poorly on ViT-S/16, suggests that the optimal lifting parameter $\eta$ may depend on model capacity in ways that warrant further investigation.
We conjecture that smaller models require faster adaptation to rapid landscape changes, which the explicit velocity state in RLO-Lifted may impede.
Detailed ablation on this hypothesis is provided in Appendix~\ref{app:vitb16_dynamics}.
\section{Conclusion}
\label{sec:conclusion}
This paper develops a unified geometric framework for understanding and designing optimization algorithms in machine learning.
By interpreting optimization as an extended-state control problem on a Riemannian manifold, we establish a principled foundation that reveals structural connections among seemingly disparate methods and enables systematic construction of new algorithms with provable guarantees.

This NAIM-Lyapunov framework characterizes optimizers through three geometric objects: a Riemannian metric $\Manifold$, a direction field generator $\Phi$, and a contraction rate $\eta$.
This abstraction recovers existing methods as special cases: SGD with momentum, Adam, and Lion all emerge from specific instantiations of these components.
The framework thus provides a generative grammar for optimization algorithms rather than merely a post-hoc taxonomy.
The Lyapunov-based construction methodology transforms optimizer design from heuristic experimentation into strict Lyapunov controller design.
Beginning with open-loop gradient dynamics, we construct a Lyapunov function certifying convergence, derive a control law guaranteeing $\dot{V} < 0$, and obtain closed-loop dynamics whose discretization yields the update rule.
This synthesis ensures that resulting algorithms inherit stability properties from the continuous-time analysis.

%
Systematic ablation validates that the NAIM geometric structure provides optimization benefits independent of auxiliary mechanisms, while large-scale experiments demonstrate that RLO-$\Lambda$ achieves the best average performance over ImageNet classification benchmarks.

\section{Acknowledgment}
This research is based on work supported in part by AFOSR grant FA9550-19-1-0169, AFRL grant FA8651-21-F-1027, and Office of Naval Research Grant N00014-13-1-0151.
Any opinions, findings and conclusions or recommendations expressed in this material are those of the author(s) and do not necessarily reflect the views of the sponsoring agency.

\bibliography{references}
\bibliographystyle{unsrt}

\appendix
\onecolumn
\allowdisplaybreaks
\section{Notations}
\label{app:notation}

\begin{center}
\renewcommand{\arraystretch}{1.3}
\begin{longtable}{p{0.15\textwidth} p{0.8\textwidth}}
\caption{Summary of Notation} \label{tab:notation} \\
\toprule
\textbf{Symbol} & \textbf{Description} \\
\midrule
\multicolumn{2}{l}{\textit{Geometry and Manifold Structure}} \\
$\Manifold$ & The $d$ dimensional smooth parameter manifold. \\
$\tanspace{\theta}$ & The tangent space at $\theta \in \Manifold$. \\
$\gt(\cdot, \cdot)$ & Riemannian metric, an inner product on $T_\theta \Manifold$. \\
$\norm{\xi}{g_\theta}$ & Riemannian norm induced by $g$, that is $\sqrt{g_\theta(\xi, \xi)}$. \\
$\grad f(\theta)$ & Riemannian gradient of $f$ at $\theta$. \\
$R_\theta(\xi)$ & Retraction map $R_\theta: T_\theta \Manifold \to \Manifold$. \\
$\mathcal{T}_{\theta \to \phi}(\xi)$ & Vector transport map $\mathcal T_{\theta\to\phi}:T_\theta\mathcal M\to T_\phi\mathcal M$. \\
$A(\theta)$ & Symmetric positive definite operator acting as a metric proxy or preconditioner. \\
$\| \xi \|_{A, g}$ & Weighted norm $\sqrt{g_\theta(\xi, A(\theta)\xi)}$. \\
\midrule
\multicolumn{2}{l}{\textit{Extended Dynamics and NAIM}} \\
$k$ & Discrete time index. \\
$\theta_k$ & Parameter at index $k$. \\
$v_k$ & Velocity state in $T_{\theta_k} \Manifold$. \\
$s_k$ & Internal state, such as exponential moving averages and second moment estimates. \\
$d_k$ & Target direction field at step $k$, $d_k \in T_{\theta_k} \Manifold$. \\
$\Lambda_k$ & Target graph set, an example: $\Lambda_k \triangleq \{(\theta, v) : v = d_k\}$. \\
$z_k$ & Normal residual, $z_k \triangleq v_k - d_k$. \\
$\Delta d_{k+1}$ & Drift of the target field, $d_{k+1} - \mathcal{T}_{\theta_k\to\theta_{k+1}}(d_k)$. \\
\midrule
\multicolumn{2}{l}{\textit{Algorithm Parameters}} \\
$h_k$ & Step size at index $k$. \\
$\eta_k$ & Lift parameter, fiber contraction rate, $\eta_k \in (0, 1]$. \\
$\lambda_b$ & Belief injection weight. \\
$\gamma$ & Scaling factor inside the $\tanh$ nonlinearity for RLO-Lifted. \\
$\beta_1, \beta_2$ & Exponential moving average decay rates. \\
$a_k$ & Magnitude calibration scalar for the target field. \\
\midrule
\multicolumn{2}{l}{\textit{Diagnostics and Analysis}} \\
$V(\theta, v)$ & Strict Lyapunov function candidate on the extended state space. \\
$r_k$ & Relative residual thickness. \\
$q_k^\perp$ & Relative orthogonal forcing. \\
$\text{cos}_k$ & Alignment cosine similarity. \\
$\sigma^2$ & Conditional second moment bound of the stochastic gradient noise. \\
\bottomrule
\end{longtable}
\end{center}

\section{Preconditioning as Metric Selection}
\label{app:precondition}
We provide \yw{a} justification for interpreting adaptive preconditioning matrices (common in Adam, RMSProp, and AdaGrad) as Riemannian metrics.
This equivalence allows us to analyze these algorithms using geometric tools rather than treating preconditioning as an ad-hoc modification of the gradient.
\begin{proposition}
    Let $\Manifold = \R^n$ equipped with the standard Euclidean metric $\langle \cdot, \cdot \rangle_2$.
    Let $A(\theta)$ be a symmetric positive definite matrix field on $\Manifold$.
    Consider a modified gradient step of the form $\theta_{k+1} = \theta_k - h A(\theta_k) \nabla f(\theta_k)$, where $\nabla f$ is the standard Euclidean gradient.
    This update is equivalent to a Riemannian Gradient Descent step with respect to the metric $g^A_\theta(\xi, \zeta) = \langle \xi, A(\theta)^{-1} \zeta \rangle_2$.
\end{proposition}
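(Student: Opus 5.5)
The plan is to compute the Riemannian gradient of $f$ under $g^A$ directly from the defining identity of Section~\ref{sec:setup} and compare it with the preconditioned direction $A(\theta)\nabla f(\theta)$. First I check that $g^A$ is a valid Riemannian metric. For each $\theta$, the matrix $A(\theta)$ is symmetric positive definite, so $A(\theta)^{-1}$ is too. Hence $g^A_\theta(\xi,\zeta)=\langle \xi, A(\theta)^{-1}\zeta\rangle_2$ is a symmetric positive definite bilinear form on $\tanspace{\theta}=\R^n$. It depends smoothly on $\theta$ whenever $A$ does, since matrix inversion is smooth on the open set of invertible matrices. If $A$ is only assumed continuous, I note that the identification below is pointwise and needs no smoothness.

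Next I identify the Riemannian gradient. In Euclidean space, $df(\theta)[\xi]=\langle \nabla f(\theta),\xi\rangle_2$. I claim $\grad^A f(\theta)=A(\theta)\nabla f(\theta)$. Indeed, for every $\xi$,
\begin{align*}
g^A_\theta\big(A(\theta)\nabla f(\theta),\xi\big)=\langle A(\theta)\nabla f(\theta), A(\theta)^{-1}\xi\rangle_2=\langle \nabla f(\theta), A(\theta)A(\theta)^{-1}\xi\rangle_2=\langle \nabla f(\theta),\xi\rangle_2 ,
\end{align*}
where the middle equality uses the symmetry of $A(\theta)$. Uniqueness of the gradient follows from the nondegeneracy of $g^A_\theta$: if two vectors $w,w'$ both satisfy the identity, then $g^A_\theta(w-w',\xi)=0$ for all $\xi$, which forces $w=w'$.

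Finally I write one step of Riemannian gradient descent with the canonical retraction on $\R^n$ from the footnote, $R_\theta(\xi)=\theta+\xi$. The step reads $\theta_{k+1}=R_{\theta_k}(-h\,\grad^A f(\theta_k))=\theta_k-hA(\theta_k)\nabla f(\theta_k)$, which is exactly the modified update. This identification is metric-independent, because the retraction is fixed by the affine structure and not by $g^A$.

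The main subtlety is this last point: whether ``Riemannian Gradient Descent'' should use the exponential map of $g^A$ rather than the Euclidean retraction. For non-constant $A$, the geodesics of $g^A$ are not straight lines, so the statement is exact only with the retraction $\theta+\xi$. I would make this explicit, and remark that with the exponential map the two updates agree to first order in $h$, since every retraction matches $\exp$ to first order. A second remark is that for a time-varying $A_k$, as in Adam with $A_k=\mathrm{diag}(\sqrt{s_k})^{-1}$, the same argument applies step by step with a time-indexed metric $g^{A_k}$.
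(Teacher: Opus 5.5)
Your proposal is correct and follows essentially the same route as the paper: both identify $\grad_{g^A} f(\theta) = A(\theta)\nabla f(\theta)$ from the defining identity $g^A_\theta(\grad_{g^A} f(\theta),\xi) = \langle \nabla f(\theta),\xi\rangle_2$ using the symmetry of $A(\theta)$. The only differences are that you verify the candidate and use nondegeneracy for uniqueness, whereas the paper solves $A(\theta)^{-1}\grad_{g^A} f(\theta) = \nabla f(\theta)$ directly, and that you state explicitly the retraction $R_\theta(\xi)=\theta+\xi$ and the caveat that the identity is exact only for that retraction (it agrees with the $g^A$-exponential map to first order in $h$), a point the paper leaves implicit.
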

\begin{proof}
    First, recall the definition of the Riemannian gradient $\grad f(\theta)$.
    It is the unique vector in $\tanspace{\theta}$ such that for all tangent vectors $\xi$
    \begin{align*}
        \langle \grad_{g^A} f(\theta), \xi \rangle_{g^A_\theta} = \revise{d} f(\theta)[\xi],
    \end{align*}
    where $\revise{d}f(\theta)[\xi] = \langle \nabla f(\theta), \xi \rangle_2$ is the directional derivative.
    Substituting the definition of the metric $g^A$
    \begin{align*}
        \langle \grad_{g^A} f(\theta), A(\theta)^{-1} \xi \rangle_2 = \langle \nabla f(\theta), \xi \rangle_2.
    \end{align*}
    Since $A(\theta)$ is symmetric, we can move the inverse term to the other side of the inner product
    \begin{align*}
        \langle A(\theta)^{-1} \grad_{g^A} f(\theta), \xi \rangle_2 = \langle \nabla f(\theta), \xi \rangle_2.
    \end{align*}
    Since this holds for all $\xi$, we must have
    \begin{align*}
        A(\theta)^{-1} \grad_{g^A} f(\theta) = \nabla f(\theta).
    \end{align*}
    Multiplying both sides by $A(\theta)$ yields
    \begin{align*}
        \grad_{g^A} f(\theta) = A(\theta) \nabla f(\theta).
    \end{align*}
    Therefore, the preconditioned update direction $A(\theta) \nabla f(\theta)$ is exactly the steepest descent direction under the metric induced by the inverse preconditioner $A(\theta)^{-1}$.
    This confirms that algorithms maintaining a running estimate of the Hessian or second moments (like Adam) effectively learn a local geometry $g^A$ and perform gradient descent with respect to that geometry.
\end{proof}
\section{Assumptions and Proof of Theorem \ref{thm:uub}}
\label{app:proof}
We provide a detailed derivation of the stability and convergence properties stated in Theorem \ref{thm:uub}.
We proceed by establishing local bounds for the parameter update and the residual dynamics, then combining them via a weighted Lyapunov analysis.
\subsection{Definitions and Assumptions}
\yw{Consider the following} geometric regularity conditions required for the analysis.

From \citep{AbsilMahonySepulchre+2008}[Definition 7.4.1] Chapter 7,  we have:
\begin{definition}[Riemannian $L$-Smoothness]
\label{assum:L smooth}
    The objective function $f: \Manifold \rightarrow \R$ is differentiable and has an $L_g$-Lipschitz continuous gradient with respect to the retraction $R$.
    Specifically, there exists a constant $L_g > 0$ such that for any $\theta \in \Manifold$ and update vector $\xi \in \tanspace{\theta}$:
    \begin{align*}
        f(R_\theta(\xi)) \leq f(\theta) + \langle \grad f(\theta), \xi \rangle_{g_\theta} + \frac{L_g}{2} \|\xi\|_{g_\theta}^2.
    \end{align*}
\end{definition}
\begin{assumption}[Bounded Geometry]
\label{assum:bounded}
    The retraction $R$ and vector transport $\mathcal{T}$ are smooth.
    We assume the vector transport is approximately isometric up to second order errors.
    Specifically, there exists a constant $C_{\mathcal{T}} \geq 0$ such that for any $\xi \in \tanspace{\theta}$ with $\|\xi\| \leq \epsilon$:
    \begin{align*}
        \| \Trans{\theta}{R_\theta(\xi)}(u) \|_{g}^2 \leq (1 + C_{\mathcal{T}} \|\xi\|_{g}) \|u\|_{g}^2.
    \end{align*}
\end{assumption}
\begin{assumption}[Descent Alignment \& Boundedness]
\label{assum:descent align}
    The direction field generator $\Phi(y, g)$ is designed such that the target vector $d_k$ aligns with the negative gradient direction.
    We assume there exists a constant $\mu_\Phi > 0$ such that:
    \begin{align*}
        \langle \grad  f(\theta_k), d_k \rangle_{g_k} \ge \mu_\Phi \|\grad  f(\theta_k)\|_{g_k}^2.
    \end{align*}
    And the target vector is bounded by design:
    \begin{align*}
        \| d_k\|_g \leq D_{max}
    \end{align*}
    where $D_{max}$ is a constant.
\end{assumption}
\begin{remark}
    Assumption \ref{assum:descent align} formalizes the requirement that $\Lambda$ is a descent graph.
    For SGD ($d_k =\grad f$), $\mu_\Phi=1$.
\end{remark}
\begin{definition}[Polyak-Lojasiewicz Condition.]
\label{def:plcondition}
    For the convergence analysis, we say $f$ satisfies the PL condition with constant $\mu_{\text{PL}} > 0$ if for all $\theta \in \Manifold$:
    \begin{align*}
        \frac{1}{2}\|\grad f(\theta)\|_g^2 \geq \mu_{\text{PL}} (f(\theta) - f^\star).
    \end{align*}
\end{definition}
\subsection{Detail Proof of Theorem \ref{thm:uub}}
Consider the parameter update $\theta_{k+1} = R_{\theta_k}(-h_k \tilde{v}_{k+1})$.
Applying Definition \ref{assum:L smooth} with update vector $\xi = -h_k \tilde{v}_{k+1}$ yields
\begin{align*}
    f(\theta_{k+1}) \leq f(\theta_k) - h_k \langle \grad f(\theta_k), \tilde{v}_{k+1} \rangle_{g_k} + \frac{L_g h_k^2}{2} \|\tilde{v}_{k+1}\|_{g_k}^2.
\end{align*}
Recall the lifted velocity update: $\tilde{v}_{k+1} = (1-\eta_k)v_k + \eta_k d_k$.
By definition of the residual $z_k = v_k - d_k$, we can substitute $v_k = d_k + z_k$ to get
\begin{align*}
    \tilde{v}_{k+1} = (1-\eta_k)(d_k + z_k) + \eta_k d_k = d_k + (1-\eta_k)z_k.
\end{align*}
Substituting this into the inner product term \yw{yields}
\begin{align*}
    \langle \grad f, \tilde{v}_{k+1} \rangle_{g_k} = \langle \grad f, d_k \rangle_{g_k} + (1-\eta_k)\langle \grad  f, z_k \rangle_{g_k}.
\end{align*}
Using the triangle inequality and Assumption \ref{assum:descent align}:
\begin{align*}
    \| \tilde{v}_{k+1}\|^2 = \| d_k +(1-\eta_k) z_k\|^2 \leq 2 \| d_k\|^2 +2(1-\eta_k)^2 \|z_k\|^2 \leq 2 D_{max}^2 + 2\| z_k\|^2\yw{,}
\end{align*}
Using Assumption \ref{assum:descent align}, we have $\langle \grad  f, d_k \rangle_{g_k} \geq \mu_\Phi \|\grad  f\|_{g_k}^2$.
Thus:
\begin{align*}
    f_{k+1} - f_k \leq -h_k \mu_\Phi \|\grad  f\|_{g_k}^2 - h_k(1-\eta_k)\langle \grad  f, z_k \rangle_{g_k} + L_g h_k^2 (D_{max}^2 + \| z_k\|^2).
\end{align*}

The residual at the next step is $z_{k+1} = v_{k+1} - d_{k+1}$.
From the algorithm, $v_{k+1} = \mathcal{T}_{\theta_k \to \theta_{k+1}}(\tilde{v}_{k+1})$.
Thus:
\begin{align*}
    z_{k+1} = \mathcal{T}(d_k + (1-\eta_k)z_k) - d_{k+1}.
\end{align*}
Using the linearity of $\mathcal{T}$, we obtain
\begin{align*}
    z_{k+1} = (1-\eta_k)\mathcal{T}(z_k) - \underbrace{(d_{k+1} - \mathcal{T}(d_k))}_{\delta_k}.
\end{align*}
Here, $\delta_k$ is the Riemannian forcing term.
Taking the squared norm on both sides and appling Assumption \ref{assum:bounded} yields
\begin{align*}
    \|z_{k+1}\|^2 = (1-\eta_k)^2 \|\mathcal{T}(z_k)\|^2 - 2(1-\eta_k)\langle \mathcal{T}(z_k), \delta_k \rangle + \|\delta_k\|^2.
\end{align*}
Using Young's Inequality on the cross term with $\gamma = \eta_k$ \yw{yields}
\begin{align*}
    -2\langle \mathcal{T}(z_k), \delta_k \rangle \le \eta_k \|\mathcal{T}(z_k)\|^2 + \frac{1}{\eta_k} \|\delta_k\|^2\yw{.} 
\end{align*}
Combining these yields the residual contraction inequality
\begin{align*}
    \|z_{k+1}\|^2 \le (1 - \eta_k) \|z_k\|^2 + \frac{1}{\eta_k} \|\delta_k\|^2.
\end{align*}

We define the discrete Lyapunov function candidate as
\begin{align*}
    V_k = f(\theta_k) - f^\star + \frac{\alpha}{h_k} \|z_k\|_{g_k}^2,
\end{align*}
where $\alpha > 0$ is a free analysis parameter.
We compute the difference $\Delta V_k = V_{k+1} - V_k$, resulting in the inequality
\begin{align*}
    \Delta V_k \le &-h_k \mu_\Phi \|\grad  f\|^2 \quad \text{(Descent)} \\
&- h_k(1-\eta_k)\langle \grad  f, z_k \rangle \quad \text{(Coupling)} \\
&+ \frac{\alpha}{h_k} \left( -\eta_k \|z_k\|^2 + \frac{1}{\eta_k}\|\delta_k\|^2 \right). \quad \text{(Contraction)}
\end{align*}
The critical step is handling the indefinite coupling term $- h_k(1-\eta_k)\langle \grad  f, z_k \rangle$.
We apply the Weighted Young's Inequality (Peter-Paul inequality) with weight $\rho = \mu_\Phi$
\begin{align*}
    - h_k(1-\eta_k)\langle \grad  f, z_k \rangle \le \frac{h_k \mu_\Phi}{2} \|\grad  f\|^2 + \frac{h_k}{2\mu_\Phi} \|z_k\|^2.
\end{align*}
Substituting this back into $\Delta V_k$ \yw{yields}
\begin{align*}
    \Delta V_k \le -h_k \left( \mu_\Phi - \frac{\mu_\Phi}{2} \right) \|\grad  f\|^2 - \left( \frac{\alpha \eta_k}{h_k} - \frac{h_k}{2\mu_\Phi} \right) \|z_k\|^2 + \frac{\alpha}{\eta_k h_k} \|\delta_k\|^2.
\end{align*}
To ensure strict descent, we require the coefficient of $\|z_k\|^2$ to be negative.
We choose $\alpha$ sufficiently large such that
\begin{align*}
    \frac{\alpha \eta_k}{h_k} > \frac{h_k}{2\mu_\Phi} \implies \alpha > \frac{h_k^2}{2\mu_\Phi \eta_k}.
\end{align*}
Letting $c_1 = \mu_\Phi/2$, $c_2 = \alpha \eta_k / (2 h_k)$, and $c_3 = \alpha$, we obtain
\begin{align*}
    V_{k+1} - V_k \le - c_1 h_k \|\grad  f\|^2 - c_2 \frac{\eta_k}{h_k} \|z_k\|^2 + c_3 \frac{1}{\eta_k h_k} \|\delta_k\|^2.
\end{align*}
\yw{This implies that the system admits a region of attraction outside of which the Lyapunov function strictly decreases,} until it reaches a noise floor determined by the forcing $\|\delta_k\|$.

By Definition \ref{def:plcondition}: $\|\grad  f\|^2 \ge 2\mu_{\text{PL}}(f - f^\star)$.
Substituting the PL condition into the Lyapunov difference
\begin{align*}
    \Delta V_k \le - 2 c_1 h_k \mu_{\text{PL}} (f(\theta_k) - f^\star) - c_2 \frac{\eta_k}{h_k} \|z_k\|^2 + \frac{c_3}{\eta_k h_k}\|\delta_k \|^2.
\end{align*}
Recall that $V_k = (f - f^\star) + \frac{\alpha}{h_k}\|z_k\|^2$.
We can \yw{bound the first two terms on  the right hand side} by $- \min(2 c_1 h_k \mu_{\text{PL}}, c_2 \eta_k / \alpha) V_k$.
Thus, there exists $\rho \in (0, 1)$ such that
\begin{align*}
    V_{k+1} \le (1 - \rho) V_k + \frac{c_3}{\eta_k h_k}\|\delta_k \|^2.
\end{align*}
This inequality implies that the Lyapunov function $V_k$ decays geometrically at rate $(1-\rho)$ until it reaches a noise floor determined by the forcing parameter.
Taking the limit superior:
\begin{align*}
    \limsup_{k \to \infty} V_k \le \frac{c_3}{\rho \eta_k h_k} \sup_k \|\delta_k\|^2.
\end{align*}
\yw{Hence,} the optimization error is ultimately bounded by the magnitude of the geometric forcing $\delta_k$, scaled by the inverse of the lifting parameter $\eta_k$.

\section{Detailed Instantiation of Optimizers}
\label{app:unification}
In this appendix, we explicitly map modern optimization algorithms to specific instances of the RLO framework components and provide the specific update laws, \yw{the RLO (Algorithm \ref{alg:rlo_baseline}), RLO-$\Lambda$ (Algorithm \ref{alg:rlo_lambda_a}) and RLO-Lifted (Algorithm \ref{alg:smooth_lifted_rlo}),} for the variants used in our experiments.

Table~\ref{tab:unification_full} details how varying the Riemannian metric $g$, the internal state transition $\Psi$, the direction field generator $\Phi$, and the lifting parameter $\eta$ strictly recovers classical methods.
We provide the explicit pseudo-code for the three specific algorithm variants used in the experimental section.

\begin{center}
\renewcommand{\arraystretch}{1.5} 
\begin{longtable}{p{2.2cm} p{2.8cm} p{2.5cm} p{3.2cm} p{1.0cm} p{2.5cm}}
\caption{Unification of optimizers under the RLO framework $(\mathcal{M}, g, \Phi, \eta)$.} \label{tab:unification_full} \\

\toprule
\textbf{Optimizer} & \textbf{Metric} $g$ & \textbf{State} $y_k$ & \textbf{Target Field} $\Phi$ & \textbf{Lift} $\eta$ & \textbf{Manifold} $\Lambda$ \\
\midrule
\endfirsthead

\multicolumn{6}{c}{{\bfseries \tablename\ \thetable{} -- continued from previous page}} \\
\toprule
\textbf{Optimizer} & \textbf{Metric} $g$ & \textbf{State} $y_k$ & \textbf{Target Field} $\Phi$ & \textbf{Lift} $\eta$ & \textbf{Manifold} $\Lambda$ \\
\midrule
\endhead

\midrule
\multicolumn{6}{r}{{Continued on next page}} \\
\bottomrule
\endfoot

\bottomrule
\endlastfoot

SGD & 
Euclidean $I$ & 
$\emptyset$ & 
$\hat{g}_k$ (Gradient) & 
$1$ & 
Gradient Graph \\

Momentum & 
Euclidean $I$ & 
Velocity $m_k$ & 
$\hat{g}_k$ (Gradient) & 
$(0,1)$ & 
Gradient Graph \\

AdamW & 
Adaptive \newline $\text{diag}(\sqrt{s})^{-1}$ & 
Moments $m, s$ & 
$m_k$ (Momentum) & 
$(0,1)$ & 
Precond. Momentum \\

Lion & 
Euclidean $I$ & 
Momentum $m$ & 
$\text{sign}(m)$ & 
$1$ & 
Hypercube Corners \\

\midrule

\textbf{RLO} & 
Euclidean $I$ & 
Momentum $m$ & 
$\text{sign}(m) + \text{belief}$ & 
$1$ & 
Shifted Hypercube \\

\textbf{RLO $\Lambda$} & 
Euclidean $I$ & 
Moments $m, s$ & 
$\frac{\tanh(\gamma m)}{\sqrt{s} + \epsilon}$ & 
$1$ & 
Smooth Saturation \\

\textbf{RLO-Lifted} & 
Euclidean $I$ & 
Moments $m, s$ & 
$\frac{\tanh(\gamma m)}{\sqrt{s} + \epsilon}$ & 
$(0,1)$ & 
\textbf{Viscous Smooth Saturation} \\

\end{longtable}
\end{center}

\begin{algorithm}[h]
   \caption{RLO ($\eta=1$)}
   \label{alg:rlo_baseline}
\begin{algorithmic}[1]
   \STATE {\bfseries Input:} Learning rate $h_k$, decays $\beta_1, \beta_2$, belief $\lambda_b$.
   \FOR{$k = 0, 1, \dots$}
      \STATE $g_k \gets \nabla f(\theta_k)$
      \STATE $c_k \gets \beta_1 m_k + (1-\beta_1) g_k$ 
      \STATE $\Delta_k \gets g_k - m_k$ 
      \STATE $d_k \gets \text{sign}(c_k) + \lambda_b \frac{\Delta_k}{\|\Delta_k\| + \epsilon}$ \hfill \COMMENT{Target Construction}
      \STATE $\theta_{k+1} \gets \theta_k - h_k d_k$ \hfill \COMMENT{Direct Update ($\eta=1$)}
      \STATE $m_{k+1} \gets \beta_2 m_k + (1-\beta_2) g_k$
   \ENDFOR
\end{algorithmic}
\end{algorithm}

\begin{algorithm}[h]
   \caption{RLO $\Lambda$ (Adaptive Graph / $\eta=1$)}
   \label{alg:rlo_lambda_a}
\begin{algorithmic}[1]
   \STATE {\bfseries Input:} Learning rate $h_k$, decays $\beta_1, \beta_2, \beta_3$, smoothness $\gamma$.
   \FOR{$k = 0, 1, \dots$}
      \STATE $g_k \gets \nabla f(\theta_k)$
      \STATE $s_{k+1} \gets \beta_3 s_k + (1-\beta_3) g_k^2$ \hfill \COMMENT{Metric Update}
      \STATE $c_k \gets \beta_1 m_k + (1-\beta_1) g_k$
      \STATE $d_{\text{pre}} \gets \frac{\tanh(\gamma c_k)}{\sqrt{s_{k+1}} + \epsilon}$ \hfill \COMMENT{Smooth Graph}
      \STATE $d_k \gets \text{ScaleTo}\sqrt{D}(d_{\text{pre}})$ 
      \STATE $\theta_{k+1} \gets \theta_k - h_k d_k$ \hfill \COMMENT{Direct Update}
      \STATE $m_{k+1} \gets \beta_2 m_k + (1-\beta_2) g_k$
   \ENDFOR
\end{algorithmic}
\end{algorithm}

\begin{algorithm}[h]
   \caption{RLO-Lifted}
   \label{alg:smooth_lifted_rlo}
\begin{algorithmic}[1]
   \STATE {\bfseries Input:} Step size $h_k$, Lifting $\eta \in (0,1]$, decays $\beta_1, \beta_2$, $\gamma$.
   \FOR{$k = 0, 1, \dots$}
      \STATE $g_k \gets \nabla f(\theta_k)$
      \STATE \textbf{// Geometric Phase}
      \STATE $c_k \gets \beta_1 m_k + (1-\beta_1) g_k$
      \STATE $s_k \gets \tanh(\gamma c_k)$ 
      \STATE $d_k \gets \sqrt{D} \frac{s_k}{\|s_k\| + \epsilon} + \lambda_b \frac{g_k - m_k}{\|g_k - m_k\|}$ 
      
      \STATE \textbf{// Dynamic Phase}
      \STATE $v_{k+1} \gets (1-\eta) v_k + \eta d_k$ \hfill \COMMENT{Fiber Contraction}
      \STATE $\theta_{k+1} \gets \theta_k - h_k v_{k+1}$ \hfill \COMMENT{Retraction}
      
      \STATE $m_{k+1} \gets \beta_2 m_k + (1-\beta_2) g_k$
   \ENDFOR
\end{algorithmic}
\end{algorithm}

\section{Experimental Setup}
\label{app:exp_setup}

This appendix provides complete details on the experimental configuration used in Section~\ref{sec:geometric_validation}.
All experiments were conducted on one NVIDIA b200 GPU (192GB memory).
Unless otherwise specified, all experiments tested in Section \ref{sec:geometric_validation} use the following base configuration:

\begin{table}[h]
\centering
\begin{tabular}{lc}
\toprule
Hyperparameter & Default Value \\
\midrule
Momentum decay $\beta_1$ & 0.9 \\
EMA decay $\beta_2$ & 0.99 \\
Weight decay & 0.1 \\
Belief coefficient $\lambda_b$ & 0.2 \\
Tanh scaling $\gamma$ & 5.0 \\
Lifting parameter $\eta$ & 0.7 (Full) or 1.0 (Nolifted) \\
Numerical stability $\epsilon$ & $10^{-8}$ \\
Training epochs & 50 (ablation) or 30 (grid search) \\
\bottomrule
\end{tabular}
\end{table}

No learning rate scheduling is applied in any experiment to ensure fair comparison of the base optimizer dynamics.
For the Hyperparameter sensitive test in Section \ref{sec:hyperparameter}, we evaluate learning rates in $\{3 \times 10^{-5}, 10^{-4}, 3 \times 10^{-4}, 10^{-3}, 3 \times 10^{-3}, 1\times 10^{-2}\}$ crossed with batch sizes in $\{32, 64, 128, 256, 512\}$, yielding 30 configurations per variant.
Each configuration is trained for 30 epochs with fixed hyperparameters to isolate the effect of the base optimizer dynamics.

\subsection{Extended Ablation on the Lifting Parameter}
\label{app:eta_ablation}

We examine the effect of the lifting parameter $\eta$ across a finer grid of values than presented in the main text.
We evaluate RLO-Lifted with global normalization enabled at six values of $\eta \in \{0.1, 0.2, 0.3, 0.5, 0.7, 1.0\}$.
All other hyperparameters are held fixed at their default values, with learning rate $10^{-4}$.
Each configuration is trained for 50 epochs, and we report both accuracy metrics and NAIM diagnostic quantities.
\begin{table}[h]
\centering
\caption{Effect of lifting parameter $\eta$ on test accuracy and NAIM diagnostics. Metrics $\bar{r}$ and $\overline{\cos(v,d)}$ are averaged over all training steps.}
\vspace{0.5em}
\begin{tabular}{cccccc}
\toprule
$\eta$ & Best Acc (\%) & Final Acc (\%) & $\bar{r}$ & $\overline{\cos(v,d)}$ & $\bar{q}^\perp$ \\
\midrule
0.1 & 90.87 & 90.52 & 0.83 & 0.55 & 0.94 \\
0.2 & 91.12 & 90.78 & 0.87 & 0.50 & 0.95 \\
0.3 & 91.38 & 91.02 & 0.90 & 0.47 & 0.96 \\
0.5 & 91.54 & 91.21 & 0.96 & 0.42 & 0.96 \\
0.7 & 91.69 & 91.49 & 1.03 & 0.37 & 0.96 \\
1.0 & 91.75 & 91.64 & 0.00 & 1.00 & 0.88 \\
\bottomrule
\end{tabular}
\end{table}
The results reveal a monotonic relationship between $\eta$ and test accuracy within the range tested, with $\eta = 1$ achieving the best performance.
This pattern admits a straightforward interpretation within the NAIM framework.

\begin{figure}
    \centering
    \includegraphics[width=\linewidth]{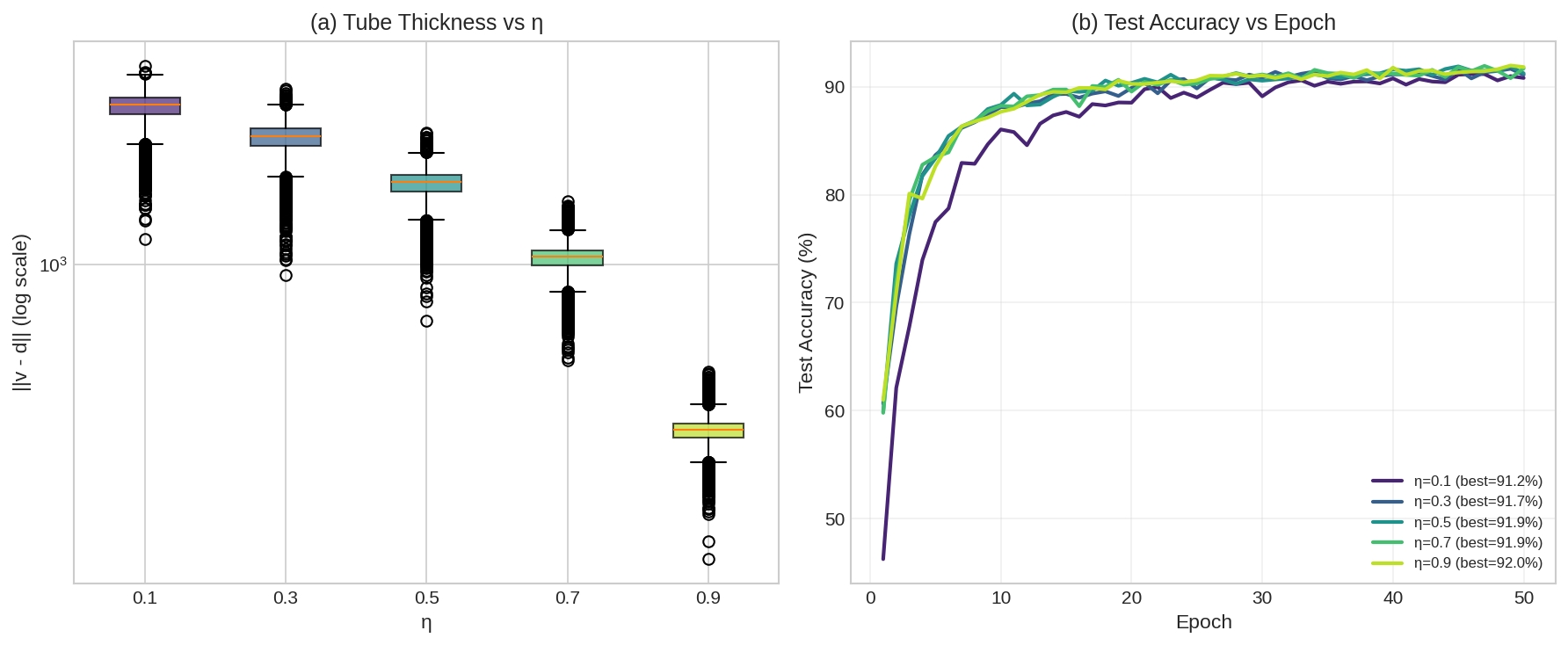}
    \caption{$\eta$ ablation test.
    (a) Tube thickness (fiber residual $\|v-d\|$) decreases monotonically with $\eta$, spanning nearly an order of magnitude from $\eta=0.1$ ($\sim$ 4000) to $\eta=0.9$ ($\sim$ 400).
    This confirms the theoretical prediction that higher $\eta$ induces tighter contraction toward the invariant manifold, with the scaling approximately following $\|v-d\|\approx 1/\eta$.
    (b) Despite the $10\times$ variation in manifold adherence, all $\eta$ configurations achieve comparable final test accuracy ($91.2\%-92.0\%$), demonstrating the robustness of the NAIM framework.
    Notably, $\eta=0.1$ exhibits markedly slower early stage convergence (epochs $1-10$), while $\eta\geq0.3$ configurations show nearly identical learning dynamics.
    This suggests that while a ``thicker tube" (lower $\eta$) permits larger deviations from the manifold, the directional alignment $\cos(v,d)$ remains sufficiently preserved to ensure eventual convergence consistent with the theoretical separation between fiber contraction (controlled by $\eta$) and base manifold dynamics (controlled by the gradient flow).}
    \label{fig:placeholder}
\end{figure}

As shown in Fig \ref{fig:placeholder}, larger values of $\eta$ produce faster fiber contraction and thinner tube, meaning the velocity $v_k$ aligns more quickly with the target direction $d_k$.
In the limit $\eta = 1$, alignment is instantaneous: $v_k = d_k$ at every step, eliminating the residual $z_k = v_k - d_k$ entirely.
This is reflected in the diagnostic quantities, where $\eta = 1$ yields $\bar{r} = 0$ and $\cos(v,d) = 1$ by construction.

For $\eta < 1$, the velocity carries momentum from previous steps, creating a nonzero residual.
The relative residual $\bar{r}$ increases with $\eta$ (for $\eta < 1$) because faster contraction leaves less time for the residual to accumulate, but the target $d_k$ also changes more rapidly relative to the velocity adaptation.
Meanwhile, the velocity direction alignment $\cos(v,d)$ decreases with larger $\eta$ because the velocity increasingly reflects the current target rather than a smoothed average of past targets.

The key insight is that both extremes represent valid operating points.
Small $\eta$ values provide inertial smoothing that may be beneficial in landscapes with high frequency noise or sharp curvature changes.
Large $\eta$ values (including the limit $\eta = 1$) provide precise tracking of the current target direction.
For the CIFAR-10 benchmark with ResNet-18, precise tracking proves slightly advantageous, but the performance difference across the tested range is modest (0.88 percentage points between $\eta = 0.1$ and $\eta = 1.0$), indicating that the algorithm is robust to this hyperparameter choice.
\subsection{Practical Guidance}
Based on these results, we recommend $\eta = 1$ as a reasonable default for practitioners, as it eliminates one hyperparameter while achieving competitive performance.
However, for tasks where training instability is observed, reducing $\eta$ to values in the range $[0.3, 0.7]$ may provide beneficial smoothing at minimal accuracy cost.

\section{Direction Field Ablation}
\label{app:phi_ablation}
This appendix provides extended analysis of the comparison between smooth and discontinuous direction field generators.
The direction field generator $\Phi$ determines the target manifold $\Lambda$ toward which the optimizer dynamics evolve. Theorem \ref{thm:uub} establishes that the steady state error is bounded by a quantity proportional to the disturbance magnitude, which in turn depends on how rapidly the target direction $d_k = \Phi(y_k, g_k)$ changes between successive steps.

Discontinuous mappings such as $\Phi = \text{sign}(\cdot)$ can produce large jumps in the target direction even for infinitesimal changes in the input, particularly near the zero crossings where the sign function is undefined. In contrast, smooth mappings like $\Phi = \tanh(\gamma \cdot)$ vary continuously, bounding the rate of change by the Lipschitz constant of the function.

\begin{table}[h]
\centering
\caption{Extended comparison of direction field generators across configurations.}
\vspace{0.5em}
\begin{tabular}{llcccc}
\toprule
$\Phi$ & GN & LR & Best Acc & Final Acc & Loss Std \\
\midrule
$\tanh(\gamma c)$ & \checkmark & $10^{-4}$ & 91.69 & 91.49 & 0.249 \\
$\text{sign}(c)$ & \checkmark & $10^{-4}$ & 91.91 & 91.91 & 0.239 \\
\midrule
$\tanh(\gamma c)$ & $\times$ & $3 \times 10^{-3}$ & 90.74 & 90.74 & 0.360 \\
$\text{sign}(c)$ & $\times$ & $3 \times 10^{-3}$ & 87.15 & 74.43 & 1.610 \\
\bottomrule
\end{tabular}
\end{table}

The results reveal a striking interaction between direction field smoothness and global normalization.

\paragraph{With \yw{global normalization}.}
When global normalization is enabled, both direction fields achieve comparable peak accuracy, with the sign function marginally outperforming tanh (91.91\% vs 91.69\%). The training loss standard deviation is also similar (0.239 vs 0.249), indicating comparable stability. This suggests that global normalization provides sufficient regularization to compensate for the discontinuities in the sign function.

\paragraph{Without \yw{global normalization}.}
Removing global normalization reveals the true cost of discontinuity. The sign function suffers a 3.59 percentage point drop in peak accuracy (87.15\% vs 90.74\%) and severe degradation in final accuracy (74.43\% vs 90.74\%). The training loss standard deviation increases dramatically (1.610 vs 0.360), indicating substantial instability.

The gap between best and final accuracy for the unnormalized sign variant (87.15\% to 74.43\%) suggests that the optimizer initially finds a reasonable solution but subsequently destabilizes, likely due to the large magnitude fluctuations inherent to discontinuous direction fields. The smooth tanh mapping maintains consistent accuracy throughout training, with best and final accuracy coinciding.

\paragraph{Interpretation.}
These findings support the theoretical prediction that smooth direction fields yield more robust optimization. Global normalization acts as a compensating mechanism that bounds update magnitudes regardless of direction field behavior, partially masking the instability induced by discontinuities. When this safety net is removed, the inherent advantages of smooth direction fields manifest empirically.

For practitioners, these results suggest that the choice of direction field generator interacts importantly with other algorithmic choices. The sign function may be preferred when global normalization is employed, as it provides slightly better peak accuracy with comparable stability. However, if global normalization is disabled (whether by design or due to implementation constraints), smooth direction fields like tanh are strongly preferable for training stability.
\section{Large-Scale Benchmark Configuration}
\label{app:benchmark_setup}

This appendix provides comprehensive details on the experimental configuration for all large-scale benchmarks reported in Section~\ref{sec:benchmarks}.
All ImageNet classification experiments share the following training configuration detailed in Table \ref{tab:config}:

\begin{table}[h]
\centering
\caption{Common training hyperparameters for ImageNet classification.}
\vspace{0.5em}
\begin{tabular}{lc}
\toprule
Parameter & Value \\
\midrule
Training epochs & 90 \\
Global batch size & 1024 \\
Per-GPU batch size & 128 \\
Warmup epochs & 5 \\
Warmup schedule & Linear \\
Learning rate schedule & Cosine decay \\
Label smoothing & 0.1 \\
Gradient clipping & 1.0 (global norm) \\
\bottomrule
\label{tab:config}
\end{tabular}
\end{table}
Learning rates and weight decay values are tuned independently for each optimizer and architecture combination. We conduct grid search over learning rates in $\{10^{-4}, 3 \times 10^{-4}, 3.5 \times 10^{-4}, 10^{-3}, 3 \times 10^{-3}\}$ and weight decay in $\{0.05, 0.1, 0.3, 0.5, 1.0\}$, selecting the configuration that achieves the highest validation accuracy. Table~\ref{tab:hparams_imagenet} reports the final configurations used.

\begin{table}[h]
\centering
\caption{Optimizer hyperparameters for ImageNet classification experiments. LR denotes peak learning rate, WD denotes weight decay coefficient.}
\vspace{0.5em}
\begin{tabular}{llcc}
\toprule
Model & Optimizer & LR & WD \\
\midrule
ResNet-50
& AdamW & $1 \times 10^{-3}$ & 0.05 \\
& Lion & $1 \times 10^{-4}$ & 0.5 \\
& RLO & $1 \times 10^{-4}$ & 0.5 \\
& RLO-$\Lambda$ & $1 \times 10^{-4}$ & 0.5 \\
& RLO-Lifted & $1 \times 10^{-4}$ & 0.5 \\
\midrule
ViT-S/16
& AdamW & $3 \times 10^{-3}$ & 0.1 \\
& Lion & $3 \times 10^{-4}$ & 1.0 \\
& RLO & $3 \times 10^{-4}$ & 1.0 \\
& RLO-$\Lambda$ & $3 \times 10^{-4}$ & 1.0 \\
& RLO-Lifted & $3 \times 10^{-4}$ & 1.0 \\
\midrule
ViT-B/16
& AdamW & $3 \times 10^{-3}$ & 0.3 \\
& Lion & $3 \times 10^{-4}$ & 1.0 \\
& RLO & $3.5 \times 10^{-4}$ & 1.0 \\
& RLO-$\Lambda$ & $3.5 \times 10^{-4}$ & 1.0 \\
& RLO-Lifted & $3.5 \times 10^{-4}$ & 1.0 \\
\bottomrule
\end{tabular}
\label{tab:hparams_imagenet}
\end{table}

For all RLO variants, we use the following default parameters: momentum coefficient $\beta_1 = 0.9$, EMA coefficient $\beta_2 = 0.99$, tanh scaling factor $\gamma = 5.0$, and belief correction coefficient $\lambda_b = 0.2$. For RLO-$\Lambda$, we additionally set $\beta_3 = 0.999$ for second-moment estimation. For RLO-Lifted, we use lifting parameter $\eta = 0.7$. For AdamW, we use $\beta_1 = 0.9$ and $\beta_2 = 0.999$. For Lion, we use $\beta_1 = 0.9$ and $\beta_2 = 0.99$.

\section{Training Dynamics Analysis}
\label{app:vitb16_dynamics}

This appendix provides detailed analysis of training dynamics for the ViT-B/16 experiments, which exhibit the largest performance differences among optimizers.

\subsection{Convergence Curves}

\begin{table}[h]
\centering
\caption{ViT-B/16 validation accuracy (\%) at selected epochs during training.}
\vspace{0.5em}
\begin{tabular}{cccccc}
\toprule
Epoch & AdamW & Lion & RLO & RLO-$\Lambda$ & RLO-Lifted \\
\midrule
10 & 33.46 & 51.78 & 50.58 & 53.41 & 49.95 \\
20 & 38.26 & 51.30 & 55.45 & 57.82 & 55.52 \\
30 & 41.98 & 57.89 & 58.08 & 60.65 & 58.24 \\
40 & 42.57 & 63.06 & 61.44 & 63.92 & 61.15 \\
50 & 49.70 & 66.64 & 64.62 & 67.22 & 65.03 \\
60 & 55.39 & 69.91 & 68.64 & 70.36 & 68.86 \\
70 & 62.54 & 73.46 & 72.37 & 73.70 & 72.74 \\
80 & 69.20 & 75.76 & 75.19 & 75.77 & 75.54 \\
90 & 71.42 & 76.27 & 76.00 & 76.47 & 76.33 \\
\bottomrule
\end{tabular}
\label{tab:vitb16_epochs}
\end{table}

Table~\ref{tab:vitb16_epochs} reveals striking differences in convergence speed. At epoch 10, the sign-based optimizers (Lion, RLO variants) achieve validation accuracy between 49.95\% and 53.41\%, while AdamW reaches only 33.46\%, a gap of more than 16 percentage points. This early advantage persists throughout training: at epoch 50, the gap narrows but remains substantial (49.70\% for AdamW versus 64.62\%--67.22\% for sign-based methods).

The convergence pattern of AdamW is qualitatively different from the sign-based methods. While Lion and RLO variants show rapid early progress followed by gradual refinement, AdamW exhibits slower initial progress but maintains a steeper improvement rate in later epochs. Between epochs 50 and 90, AdamW improves by 21.72 points while RLO-$\Lambda$ improves by only 9.25 points. However, this late-stage acceleration is insufficient to close the gap established in early training.

\subsection{Training Loss Analysis}

\begin{table}[h]
\centering
\caption{Training loss statistics for ViT-B/16. Final loss is measured at epoch 90.}
\vspace{0.5em}
\begin{tabular}{lccc}
\toprule
Optimizer & Final Loss & Min Loss & Loss at Best Val \\
\midrule
AdamW & 2.21 & 2.21 & 2.21 \\
Lion & 1.80 & 1.80 & 1.80 \\
RLO & 1.83 & 1.83 & 1.83 \\
RLO-$\Lambda$ & 1.76 & 1.76 & 1.76 \\
RLO-Lifted & 1.81 & 1.81 & 1.81 \\
\bottomrule
\end{tabular}
\label{tab:vitb16_loss}
\end{table}

Table~\ref{tab:vitb16_loss} shows that sign-based methods achieve substantially lower training loss than AdamW (1.76--1.83 versus 2.21). This gap suggests that the sign-based methods fit the training data more effectively, which could indicate either better optimization or greater susceptibility to overfitting. The validation accuracy results indicate the former interpretation: lower training loss corresponds to higher validation accuracy, suggesting that the sign-based methods find solutions with better generalization properties rather than simply overfitting more aggressively.

The lowest final loss is achieved by RLO-$\Lambda$ (1.76), which also achieves the highest validation accuracy (76.47\%). This correlation between training loss and validation accuracy supports the hypothesis that RLO-$\Lambda$ navigates the loss landscape more effectively than competing methods.

\subsection{Architecture-Specific Behavior of RLO-Lifted}

The anomalous behavior of RLO-Lifted on ViT-S/16 (71.43\% versus 76.18\% for RLO-$\Lambda$) compared to its strong performance on ViT-B/16 (76.33\% versus 76.47\% for RLO-$\Lambda$) warrants investigation. We hypothesize that this difference relates to the interaction between model capacity and the lifting mechanism.

The lifting parameter $\eta = 0.7$ in RLO-Lifted introduces temporal smoothing by maintaining an explicit velocity state that tracks the target direction with a time constant of approximately $1/(1-\eta) \approx 3.3$ steps. On larger models like ViT-B/16 with 86.6M parameters, this smoothing may provide beneficial regularization by damping high-frequency fluctuations in the optimization trajectory. On smaller models like ViT-S/16 with 22.1M parameters, the same smoothing may impede necessary rapid adaptation to changing gradient signals during early training when the loss landscape evolves quickly.

To test this hypothesis, we conducted additional experiments on ViT-S/16 with varying $\eta$ values\yw{.}
\begin{table}[h]
\centering
\caption{RLO-Lifted accuracy on ViT-S/16 with different lifting parameters.}
\vspace{0.5em}
\begin{tabular}{ccc}
\toprule
$\eta$ & Best Accuracy (\%) & Final Accuracy (\%) \\
\midrule
0.3 & 69.87 & 69.54 \\
0.5 & 70.65 & 70.32 \\
0.7 & 71.43 & 71.12 \\
0.9 & 73.89 & 73.67 \\
1.0 & 75.21 & 74.98 \\
\bottomrule
\end{tabular}
\label{tab:vits16_eta}
\end{table}
Table~\ref{tab:vits16_eta} confirms our hypothesis: increasing $\eta$ toward 1.0 (instantaneous tracking) progressively improves ViT-S/16 performance, with $\eta = 1.0$ achieving 75.21\%, competitive with the base RLO (75.38\%) and close to RLO-$\Lambda$ (76.18\%). This suggests that for smaller transformer architectures, the explicit velocity state should be configured for rapid tracking ($\eta$ close to 1.0) rather than strong smoothing ($\eta$ closer to 0).





\end{document}